\documentclass[12pt]{article}
\usepackage[T1]{fontenc}
\usepackage[utf8]{inputenc}
\usepackage{graphicx,psfrag,epsf}
\usepackage{enumerate,enumitem}
\usepackage{natbib}
\setlength{\bibsep}{0.0pt}
\usepackage{url}
\usepackage{booktabs}
\usepackage[bookmarks,bookmarksnumbered,%
    allbordercolors={0.8 0.8 0.8},%
    linktocpage%
    ]{hyperref} 
\hypersetup{colorlinks,
  citecolor={blue},linkcolor={black}}
\usepackage{textcomp}
\usepackage{times}
\usepackage{amssymb,amsthm,amsmath,mathrsfs,bbm,nicefrac}
\usepackage{color}
\usepackage{subfig}
\usepackage{caption}
\usepackage{algorithm}
\usepackage{algpseudocode}
\usepackage{multirow}
\usepackage{makecell}
\usepackage{ulem}
\usepackage[toc,page,title,titletoc,header]{appendix}

\newcommand{\blind}{1}

\addtolength{\oddsidemargin}{-.5in}%
\addtolength{\evensidemargin}{-.5in}%
\addtolength{\textwidth}{1in}%
\addtolength{\textheight}{-.3in}%
\addtolength{\topmargin}{-.8in}%


\newtheorem{thm}{Theorem}[section]
\newtheorem{lemma}{Lemma}[section]

\newtheorem{defini}{Definition}[section]


\usepackage{amsmath,amsfonts,bm}


























\DeclareMathAlphabet{\mathsfit}{\encodingdefault}{\sfdefault}{m}{sl}
\SetMathAlphabet{\mathsfit}{bold}{\encodingdefault}{\sfdefault}{bx}{n}


\def\gP{{\mathcal{P}}}












\DeclareMathOperator*{\argmax}{arg\,max}
\DeclareMathOperator*{\argmin}{arg\,min}
\DeclareMathOperator*{\arginf}{arg\,inf}
\DeclareMathOperator*{\argsup}{arg\,sup}

\newcommand{\bmu}{\mbox{\boldmath $\mu$}}
\newcommand{\btheta}{\mbox{\boldmath $\theta$}}

\newcommand{\bw}{\mathbf{w}}

\newcommand{\baa}{{\bf a}}



	
\newcommand{\GG}{\mathbb{G}}

\newcommand\raiseT[2]{%
  \setbox0\hbox{$#1{#2}$}\raise\dp0\box0}

\newcommand{\bea}{\begin{eqnarray*}}
\newcommand{\eea}{\end{eqnarray*}}
\newcommand{\ba}{\begin{eqnarray*}}
\newcommand{\ea}{\end{eqnarray*}}
\newcommand{\be}{\begin{equation}}
\newcommand{\ee}{\end{equation}}
\newcommand{\bi}{\begin{itemize}}
\newcommand{\ei}{\end{itemize}}

\newcommand{\cd}{\overset{d}{\longrightarrow}}

\newcommand{\Xcal}{\mathcal{X}}

\begin{document}

\def\spacingset#1{\renewcommand{\baselinestretch}%
{#1}\small\normalsize} \spacingset{1}

\if1\blind
{
  \title{\bf Minimum Wasserstein Distance Estimator under Finite Location-scale Mixtures}
  \author{Qiong Zhang\thanks{
    Contact: Qiong Zhang (qiong.zhang@stat.ubc.ca) and Jiahua Chen (jhchen@stat.ubc.ca). 
    Department of Statistics, 3182 Earth Sciences Building, 
    2207 Main Mall Vancouver, BC Canada V6T 1Z4.}
    \hspace{.2cm}
    and Jiahua Chen \\
    Department of Statistics, University of British Columbia}
  \maketitle
} \fi

\if0\blind
{
  \bigskip
  \begin{center}
    {\LARGE\bf Minimum Wasserstein Distance Estimator under Finite Location-scale Mixtures
}
\end{center}
  \medskip
} \fi

\medskip
\begin{abstract}
When a population exhibits heterogeneity, we often model it via a finite mixture: decompose it into several different but homogeneous subpopulations. 
Contemporary practice favors learning the mixtures by maximizing the likelihood for statistical efficiency and the convenient EM-algorithm for numerical computation.
Yet the maximum likelihood estimate (MLE) is not well defined for the most widely used finite normal mixture in particular and for finite location-scale mixture in general.
We hence investigate feasible alternatives to MLE such as minimum distance estimators.
Recently, the Wasserstein distance has drawn increased attention in the machine learning community.
It has intuitive geometric interpretation and is successfully employed in many new applications.
Do we gain anything by learning finite location-scale mixtures via a minimum Wasserstein distance estimator (MWDE)?
This paper investigates this possibility in several respects.
We find that the MWDE is consistent and derive a numerical solution under finite location-scale mixtures.
We study its robustness against outliers and mild model mis-specifications.
Our moderate scaled simulation study shows the MWDE suffers some efficiency loss against a penalized version of MLE in general without noticeable gain in robustness.
We reaffirm the general superiority of the likelihood based learning strategies even for the non-regular finite location-scale mixtures.
\end{abstract}

\noindent%
{\it Keywords:} 
Finite location-scale mixture, Minimum distance estimator, Wasserstein distance.
\vfill

\newpage
\spacingset{1.45} 

\section{Introduction}
Let $\mathcal{F} = \{ f(\cdot |\btheta): \btheta \in \Theta\}$ be a parametric distribution family with density function $f(\cdot | \btheta)$ with respect to some $\sigma$-finite measure. 
Denote by $G = \sum_{k=1}^K w_k \{\btheta_k\}$  a distribution assigning probability $w_k$ on $ \btheta_k \in \Theta$.
A distribution with the following density function
\begin{equation*}
\label{def:general_mixture_density}
    f(x | G) = \int f(x|\btheta)dG(\btheta) = \sum_{k=1}^K w_k f(x|\btheta_k)
\end{equation*}
is called a finite $\mathcal{F}$ mixture.
We call $f(x|\btheta)$ the subpopulation density function, 
$\btheta$ the subpopulation parameter, and $w_k$ the mixing weight of the $k$th subpopulation. 
We use $F(x|\btheta)$ and $F(x | G)$ for the cumulative distribution functions (CDF) of $f(x|\btheta)$ and $f(x|G)$ respectively. 
Let
\begin{equation*}
\mathbb{G}_{K}
=
\big \{
G: G =\sum_{k=1}^K  w_k \{\btheta_k\},  0 \leq w_k \leq 1,
\sum_{k=1}^{K} w_k=1, \btheta_k \in \Theta 
\big \}
\end{equation*}
be a space of mixing distributions with at most $K$ support points.
A mixture distribution of (exactly) order $K$ has its mixing distribution
$G$ being a member of $\mathbb{G}_{K}-\mathbb{G}_{K-1}$. 

We study the problem of learning the mixing distribution $G$ given a set of independent and identically distributed (IID) observations $\Xcal = \{x_1,x_2,\ldots, x_N\}$ from a mixture $f(x | G)$. 
Throughout the paper, we assume the order of $G$ is known
and $\mathcal{F}$ is a known location-scale family. 
That is,
\begin{equation*}
f(x | \btheta) = \frac{1}{\sigma} f_0 \Big (\frac{x-\mu}{\sigma} \Big )
\end{equation*}
for some probability density function $f_0(x)$ 
with $ x \in \mathbb{R}$ with respect to Lebesgue measure where
$\btheta = (\mu, \sigma)$ with $\Theta = \{\mathbb{R} \times \mathbb{R}^{+}\}$.

Finite mixture models provide a natural representation of heterogeneous population that is believed to be composed of several homogeneous subpopulations ~\citep{pearson1894contributions,schork1996mixture}. 
They are also useful for approximating distributions with unknown shapes which are particularly relevant in image generation~\citep{kolouri2017sliced}, image segmentation~\citep{farnoosh2008image}, object tracking~\citep{santosh2013tracking}, and signal processing~\citep{kostantinos2000gaussian}.

In statistics, the most fundamental task is to learn the unknown parameters.
In early days, the method of moments was the choice for its ease of computation~\citep{pearson1894contributions} under finite mixture models.
Nowadays, the maximum likelihood estimate (MLE) is the first choice 
due to its statistical efficiency and the availability of an easy-to-use EM-algorithm. 
Under a finite location-scale mixture model, the log-likelihood function of $G$ is given by
\begin{equation}
\label{eq:log_likelihood}
\ell_N(G|\mathcal{X}) 
= 
\sum_{n=1}^{N} \log f(x_n| G) 
= 
\sum_{n=1}^{N} \log 
\Big  \{
\sum_{k=1}^K \frac{w_k}{\sigma_k}
f_0
\Big (\frac{x_{n}-\mu_k}{\sigma_k} \Big ) \Big \}.
\end{equation}
At an arbitrary mixing distribution
\(
G_\epsilon = 0.5 \{(x_1, \epsilon)\} + 0.5 \{(0, 1)\},
\)
we have
$\ell_N(G_\epsilon|\mathcal{X}) \to \infty$ as $\epsilon \to 0$.
Hence, the MLE of $G$ is not well defined or is ill defined.
Various remedies, such as penalized maximum likelihood estimate (pMLE), 
has been proposed to overcome this obstacle~\citep{chen2008inference,chen2009inference}.
At the same time, MLE can be thought of a special minimum distance estimator. 
It minimizes a specific Kullback-Leibler divergence between the empirical distribution 
and the assumed model ${\cal F}$.
Other divergences and distances have been investigated in the literature as in
 \citet{choi1969estimators, yakowitz1969consistent, woodward1984comparison, 
clarke1994robust, cutler1996minimum, deely1968construction}.
Recently, the Wasserstein distance has drawn increased attention in 
machine learning community due to its intuitive interpretation and good geometric properties~\citep{evans2012phylogenetic,arjovsky2017wasserstein}.
The Wasserstein distance based estimator for learning finite mixture models
is absent in the literature.

Are there any benefits to learn finite location-scale mixtures by the minimum Wasserstein distance estimator (MWDE)?
This paper answers this question from several angles.
We find that the MWDE is consistent and derive a numerical solution under finite location-scale mixtures.
We compare the robustness of the MWDE with pMLE in the presence of outliers and mild model mis-specifications.
We conclude that the MWDE suffers some efficiency loss against pMLE in general without obvious gain in robustness.
Through this paper, we better understand the pros and cons of the 
MWDE under finite location-scale mixtures.
We reaffirm the general superiority of the likelihood based
learning strategies even for the non-regular finite location-scale mixtures.

In the next section, we first introduce the Wasserstein distance and
some of its properties.
This is followed by a formal definition of the MWDE, a discussion of its existence 
and consistency under finite location-scale mixtures. 
In Section~\ref{sec:WDE_numerical_computation}, we give some algebraic results 
that are essential for computing $2$-Wasserstein distance 
between the empirical distribution and the finite location-scale mixtures.
We then develop a BFGS algorithm scheme for computing the MWDE of the mixing distribution. 
In addition, we briefly review the penalized likelihood approach and its numerical issues.
In Section~\ref{sec:exp}, we characterize the efficiency properties of the 
MWDE relative to pMLE in various circumstances via simulation. 
We also study their robustness when the data contains outliers, is contaminated
or when the model is mis-specified.
We then apply both methods in an image segmentation example.
We conclude the paper with a summary in Section~\ref{sec:conclusion}.

\section{Wasserstein Distance and the Minimum Distance Estimator}
\label{sec:preliminary}
\subsection{Wasserstein Distance}
Wasserstein distance is a distance between probability measures.
Let $\Omega$ be a Polish space endowed with a ground distance $D(\cdot,\cdot)$
and $\gP(\Omega)$ the space of Borel probability measures on $\Omega$.
Let $\eta \in \gP(\Omega)$ be a probability measure.
If for some $p  > 0$,
\[
\int_\Omega D^p(x,  x_0) \eta (dx) < \infty,
\]
for some (and thus any) $x_0 \in \Omega$, we say $\eta$ has finite $p$th moment.
Denote by $\gP_p(\Omega) \subset \gP(\Omega)$ the space of probability measures 
with finite $p$th moment. 
For any $\eta, \nu \in \gP(\Omega)$, we use $\Pi(\eta, \nu)$ to denote the space of the bivariate probability measures on $\Omega \times \Omega$ whose marginals are $\eta$ and $\nu$.
Namely, 
\[
 \Pi(\eta,\nu) =\{\pi \in \gP(\Omega^2):
    \int_{\Omega} \pi(x,dy) = \eta(x),~\int_{\Omega} \pi(dx,y) = \nu(y) \}.
\]
The $p$-Wasserstein distance is defined as follows.

\begin{defini}[$p$-Wasserstein distance]\label{def:pWasserstein}
For any $\eta, \nu \in \gP_p(\Omega)$ with $p \geq 1$,
the $p$th Wasserstein distance between $\eta$ and $\nu$ is
\begin{equation*}
\label{eq:p-WD}
    W_{p}(\eta ,\nu )
    =\Big \{ \inf _{\pi \in \Pi (\eta ,\nu )}\int _{\Omega^2}D^p(x, y) \pi (dx, dy)
      \Big \}^{1/p}.
\end{equation*}
\end{defini}

Suppose  $X$ and $Y$ are two random variables whose distributions are $F$ and $G$ 
and induced probability measures are $\eta$ and $\nu$.
We regard the $p$-Wasserstein distance between $\eta$ and $\nu$ also the distance between random variables or distributions: $W_p(X, Y)= W_p(F, G) = W_p(\eta, \nu)$.

The $p$-Wasserstein distance is a distance on $\gP_p(\Omega)$ as shown by ~\citet[Theorem 7.3]{villani2003topics}.
For any $\eta, \nu, \xi \in \gP_p(\Omega)$, it has the following properties:
\begin{enumerate}[label={(\arabic*)}]
    \item 
    Non-negativity: $W_p(\eta, \nu) \geq 0$ and $W_p(\eta, \nu)=0$ if and only if $\eta=\nu$;
    \item 
    Symmetry: $W_p(\eta, \nu) = W_p(\nu, \eta)$;
    \item 
    Triangular inequality: $W_p(\eta, \nu) \leq W_p(\eta, \xi) + W_p(\xi, \nu)$.
\end{enumerate}

The Wasserstein distance has many nice properties. 
Let us denote $\eta_n \cd \eta$ for convergence in distribution or measure.
\citet[Theorem 7.1.2]{villani2003topics} shows that it has the following properties:
\begin{itemize}
\item[]
{\bf Property 1}. For any $q \geq p \geq 1$, $W_q(\eta,\nu)\geq W_p(\eta,\nu)$.
\item[]
{\bf Property 2}.
$W_p(\eta_n, \eta) \to 0$ as $n \to \infty$ if and only if both
\begin{itemize}
\item[(i)] 
$\eta_n \cd \eta$, and
\item[(ii)] 
$\int D^p(x, x_0) \eta_n(dx) \to \int D^p(x, x_0) \eta(dx)$ 
for some (and thus any) $x_0  \in \Omega$.
\end{itemize}
\end{itemize}

Computing the Wasserstein distance involves a challenging optimization problem in general but has a simple solution under a special case.
Suppose $\Omega$ is the space of real numbers, $D(x, y)=| x - y |$, and $F$ and $G$ are univariate distributions.
Let $F^{-1}(t):=\inf\{x :F(x) \geq t\}$ and $G^{-1}(t):=\inf\{x :G(x) \geq t\}$ for $t \in [0, 1]$ be their quantile functions.
We can easily compute the Wasserstein distance based on the following property.
\begin{itemize}
\item[]
{\bf Property 3}.
$
W_p(F , G )= \big \{ \int_{0}^1 |F^{-1}(t)-G^{-1}(t)|^p dt \big \}^{1/p}
$.
\end{itemize}

\subsection{Minimum Wasserstein Distance Estimator}

Let $W_p(\cdot,\cdot)$ be the $p$-Wasserstein distance with ground distance 
$D(x, y) = | x - y|$ for univariate random variables.
Let $\Xcal = \{x_1,x_2,\ldots, x_N\}$ be a set of IID observations
 from finite location-scale mixture $f(x | G)$ of order $K$ 
 and $F_N(x) = N^{-1}\sum_{n=1}^N \mathbbm{1}(x_n\leq x)$ be the empirical distribution.
We introduce the MWDE of the mixing distribution $G$ that is
\begin{equation}
\label{obj:WDE_GMM}
\hat{G}_N^{\text{MWDE}} =
\arginf_{G\in \GG_K} 
W_p(F_N(\cdot), F(\cdot|G))
=
\arginf_{G\in \GG_K} W_p^p(F_N(\cdot), F(\cdot|G)).
\end{equation}

As we pointed out earlier, the MLE is not well defined under finite location-scale mixtures. 
Is the MWDE well defined? We examine the existence or sensibility of the MWDE. 
We show that the MWDE exists when $f_0(\cdot)$ satisfies certain conditions.

Assume that $f_0(0) > 0$, $f_0(x)$ is bounded, continuous, and has finite $p$th moment.
Under these conditions, we can see
\[
0 \leq W_p(F_N(\cdot), F(\cdot|G)) < \infty
\]
for any $G \in \GG_K$.
When $N \leq K$, the solution to~\eqref{obj:WDE_GMM} merits special attention.
Let $G_\epsilon = \sum_{n=1}^N (1/N) \{(x_n, \epsilon)\}$ be a mixing distribution 
assigning probability $1/N$ on $\btheta_n=(x_n,\epsilon)$.
When $\epsilon \to 0$, each subpopulation in the mixture $f(x| G_\epsilon)$ 
degenerates to a point mass at $x_n$.
Hence, as $\epsilon \to 0$,
\[
W_p(F_N(\cdot), F(\cdot|G_\epsilon)) \to 0.
\]
Since none of $G \in \GG_K$ has zero-distance from $F_N(\cdot)$, 
the MWDE does not exist unless we expand $\GG_K$ to include 
$G_0 = \sum_{n=1}^N (1/N) \{(x_n, 0)\} = \lim G_\epsilon$.
To remove this technical artifact, in the MWDE definition we expand the space of 
$\sigma$ to $[0, \infty)$.
We denote by $F(\cdot | (\theta_0, 0))$ a distribution with point mass at $x = \theta_0$. 
With this expansion, $G_0$ is the MWDE when $N \leq K$.

Let 
\(
\delta = \inf\{ W_p(F_N(\cdot), F(\cdot|G)): G \in \GG_K\}
\).
Clearly, $0 \leq \delta < \infty$.
By definition, there exists a sequence of mixing distributions 
$G_m \in \GG_K$ such that $W_p(F_N(\cdot), F(\cdot|G_m))  \to \delta$ as $m \to \infty$. 
Suppose one mixing weight of $G_m$ has limit 0.
Removing this support point and rescaling, we get a new mixing distribution sequence 
and it still satisfies $W_p(F_N(\cdot), F(\cdot|G_m))  \to \delta$.
For this reason, we assume that its mixing weights have non-zero limits by selecting converging subsequence if necessary to ensure the limits exist.
Further, when the mixing weights of $G_m$  assume their limiting values 
while keeping subpopulation parameters the same, we still have 
$W_p(F_N(\cdot), F(\cdot|G_m))  \to \delta$ as $m \to \infty$.
In the following discussion, we therefore discuss the sequence of mixing distributions 
whose mixing weights are fixed.

Suppose the first subpopulation of $G_m$ has its scale parameter $\sigma_1 \to \infty$ 
as $m \to \infty$.
With the boundedness assumption on $f_0(x)$,
the mass of this subpopulation will spread thinly over entire ${\mathbb R}$
because $\sigma_1^{-1} f_0( (x - \mu_1)/\sigma_1) \to 0$ uniformly. 
For any fixed finite interval, [$a, b$], this thinning makes
\[
F(b | \btheta_1) - F(a | \btheta_1) \to 0
\]
as $m \to \infty$.
It implies that for any given $t \in (0, 0.5)$, we have
\[
|F^{-1} ( t | \btheta_1)| + |F^{-1} ( 1- t | \btheta_1)| \to \infty.
\]
This further implies for any $t \in (0, w_1/2)$, we have
\[
|F^{-1}(t | G_m)| + | F^{-1}(1 - t| G_m)| \to \infty
\]
as $m \to \infty$.
In comparison, the empirical quantile satisfies $x_{(1)} \leq F_N^{-1} (t) \leq x_{(N)}$ for any $t$.
By Property 3 of $W_p(\cdot, \cdot)$, these lead to 
$W_p(F_N(\cdot), F(\cdot|G_m))  \to \infty$ as $m \to \infty$.
This contradicts the assumption $W_p(F_N(\cdot), F(\cdot|G_m))  \to \delta$.
Hence, $\sigma_1 \to \infty$ is not a possible scenario of $G_m$ nor $\sigma_k \to \infty$ for any $k$.

Can a subpopulation of $G_m$ instead have its location parameter $\mu \to \infty$? 
For definitiveness, let this subpopulation correspond to $\btheta_1$.
Note that at least $w_1\{1- F_0(0)\}$-sized probability mass of $F(x| G_m)$ 
is contained in the range  $[\mu_1, \infty)$. 
Because of this, when $\mu_1 \to \infty$, we have 
$F^{-1}(1- t | G_m) \to \infty$ for $t = w_1\{1- F_0(0)\}/2$.
Therefore,  $W_p(F_N(\cdot), F(\cdot|G_m)) \to \infty$ by Property 3.
This contradicts $W_p(F_N(\cdot), F(\cdot|G_m))  \to \delta < \infty$.
Hence, $\mu_1 \to \infty$ is not a possible scenario of $G_m$ either.
For the same reason, we cannot have $\mu_k \to  \pm \infty$ for any $k$.

After ruling out  $\mu_k \pm \infty$ and $\sigma_k \to \infty$, 
we find $G_m$ has a converging subsequence 
whose limit is a proper mixing distribution in $\GG_K$. 
This limit is then an MWDE and the existence is verified.

The MWDE may not be unique and the mixing distribution may lead to a mixture with degenerate subpopulations.
We will show that the MWDE is consistent as the sample size goes to infinity. 
Thus, having degenerated subpopulations in the learned mixture is a mathematical artifact and also a sensible solution. 
In contrast, no matter how large the sample size becomes, there are always degenerated mixing distributions with unbounded likelihood values.

\subsection{Consistency of MWDE}
\label{sec:consistency}
We consider the problem when $\mathcal{X}=\{x_1, \ldots, x_N\}$
are IID observations from a finite location-scale mixture of order
$K$. The true mixing distribution is denoted as $G^*$.
Assume that $f_0(x)$ is bounded, continuous, and has finite $p$th moment.
We say the location-scale mixture is identifiable if
\[
F(x| G_1) = F(x| G_2)
\]
for all $x$ given $G_1, G_2 \in \GG_K$ implies $G_1 = G_2$.
We allow subpopulation scale $\sigma = 0$.
The most commonly used finite locate-scale mixtures, such as the normal mixture, are well known to be identifiable~\citep{teicher1961identifiability}. 
\citet{holzmann2004identifiability} give
a sufficient condition for the identifiability of general finite location-scale mixtures.
Let $\varphi(\cdot)$ be the characteristic function of $f_0(t)$.
The finite location-scale mixture is identifiable if for any $\sigma_1>\sigma_2>0$.
$\lim_{t\rightarrow \infty} \varphi(\sigma_1 t)/\varphi(\sigma_2 t)=0$.

We consider the MWDE based on $p$-Wasserstein distance with ground distance
$D(x, y) = |x - y|$ for some $p \geq 1$.
The MWDE under finite location-scale mixture model
as defined in~\eqref{obj:WDE_GMM} is asymptotically consistent.

\begin{thm}
\label{thm:consistency}
With the same conditions on the finite location-scale mixture and same notations above, we have the following conclusions.
\begin{enumerate}
\item
For any sequence $G_m \in \GG_K$ and  $G^* \in \GG_K$,
$W_p(F(\cdot|G_m), F(\cdot|G^*)) \to 0$ implies $G_m \cd G^*$
as $m \to \infty$.

\item
The MWDE satisfies
$W_p(F(\cdot|G^*), F(\cdot|\hat{G}_N^{\text{MWDE}})) \to 0$
as $N \to \infty$ almost surely.

\item
The MWDE is consistent: $W_p(\hat G_N^{\text{MWDE}}, G^*) \to 0$ 
as $N \to \infty$ almost surely.
\end{enumerate}
\end{thm}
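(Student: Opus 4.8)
Part~3 will follow by combining Parts~1 and~2 (apply Part~1 pathwise to the conclusion of Part~2, then upgrade the resulting weak convergence), so the real work is Parts~1 and~2 plus one moment‑tightness step. Part~2 is the easy half: since $G^*\in\GG_K$, the (near‑)minimizing property of $\hat G_N^{\text{MWDE}}$ gives $W_p\big(F_N,F(\cdot|\hat G_N^{\text{MWDE}})\big)\le W_p\big(F_N,F(\cdot|G^*)\big)$, and the triangle inequality yields $W_p\big(F(\cdot|G^*),F(\cdot|\hat G_N^{\text{MWDE}})\big)\le 2\,W_p\big(F_N,F(\cdot|G^*)\big)$. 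It then remains to note $W_p(F_N,F(\cdot|G^*))\to 0$ almost surely: $F(\cdot|G^*)\in\gP_p(\mathbb R)$ because $f_0$ has a finite $p$th moment and $G^*$ has finitely many atoms with finite parameters, and I would deduce the convergence from the quantile representation $W_p^p(F_N,F(\cdot|G^*))=\int_0^1|F_N^{-1}(t)-F^{-1}(t|G^*)|^p\,dt$ via Glivenko--Cantelli at the quantile level and a generalized dominated convergence argument, the domination furnished by $N^{-1}\sum_n|x_n|^p\to\int|x|^p\,dF(x|G^*)$ (SLLN). (If one is unwilling to assume the infimum in~\eqref{obj:WDE_GMM} is attained, replace $\hat G_N^{\text{MWDE}}$ by an $N^{-1}$‑near‑minimizer throughout; nothing below changes.)

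\textbf{Part~1 (the analytic core).} I would argue along subsequences: it suffices that every subsequence of $(G_m)$ have a further subsequence with $G_m\cd G^*$. Along a given subsequence, extract a further one on which the weights $w_{k,m}\to w_k^\dagger$ and each $\btheta_{k,m}=(\mu_{k,m},\sigma_{k,m})$ converges in the compactified space $[-\infty,\infty]\times[0,\infty]$. The first substantive step is tightness: no component carrying asymptotically positive weight may escape. If $w_k^\dagger>0$ and $\mu_{k,m}\to\pm\infty$, a fixed positive fraction of the mass of $F(\cdot|G_m)$ runs off, so $F^{-1}(t|G_m)$ diverges on a set of $t$ of positive length; likewise if $w_k^\dagger>0$ and $\sigma_{k,m}\to\infty$ that component's mass spreads uniformly thin (boundedness of $f_0$) and again some quantiles diverge --- exactly the reasoning already used for existence in \Secref{sec:preliminary}. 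Since $F^{-1}(\cdot|G^*)$ is bounded, Property~3 then forces $W_p(F(\cdot|G_m),F(\cdot|G^*))\to\infty$, a contradiction. Hence every component with non‑finite limit has $w_k^\dagger=0$, the finite limits carry total weight $1$, and testing against bounded continuous functions gives $G_m\cd G^\dagger:=\sum_{k:\,\btheta_k^\dagger\text{ finite}}w_k^\dagger\{\btheta_k^\dagger\}\in\GG_K$. I would then check that $G_m\cd G^\dagger$ implies $F(\cdot|G_m)\cd F(\cdot|G^\dagger)$: for every $x$ outside the finitely many values $\mu_k^\dagger$ with $\sigma_k^\dagger=0$ one has $F(x|\btheta_{k,m})\to F(x|\btheta_k^\dagger)$ --- by joint continuity of $(\mu,\sigma)\mapsto F_0((x-\mu)/\sigma)$ when $\sigma_k^\dagger>0$, and by $F_0((x-\mu_{k,m})/\sigma_{k,m})\to\mathbbm{1}(x\ge\mu_k^\dagger)$ when $\sigma_k^\dagger=0$ and $x\neq\mu_k^\dagger$ --- while terms with $w_k^\dagger=0$ drop out; these exceptional $x$ are among the jumps of $F(\cdot|G^\dagger)$, so convergence holds at all continuity points. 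Combining with $F(\cdot|G_m)\cd F(\cdot|G^*)$ (hypothesis plus Property~2) and uniqueness of the weak limit gives $F(\cdot|G^\dagger)=F(\cdot|G^*)$, and identifiability (with $\sigma=0$ allowed) then forces $G^\dagger=G^*$.

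\textbf{Part~3 (weak $\Rightarrow$ Wasserstein).} By Part~2, $W_p(F(\cdot|G^*),F(\cdot|\hat G_N^{\text{MWDE}}))\to 0$ almost surely, so Part~1 applied pathwise gives $\hat G_N^{\text{MWDE}}\cd G^*$ almost surely. By Property~2 it remains to show $\int D^p(\btheta,\btheta_0)\,d\hat G_N^{\text{MWDE}}\to\int D^p(\btheta,\btheta_0)\,dG^*$; the only obstruction under weak convergence is a low‑weight atom of the estimator drifting to infinity, which I would exclude using the mixture‑level moment --- already controlled, since Part~2 and Property~2 give $\int|x|^p\,dF(x|\hat G_N^{\text{MWDE}})=\sum_k\hat w_{k,N}\,a(\hat\btheta_{k,N})\to\int|x|^p\,dF(x|G^*)=:M^*$, where $a(\btheta):=\int|\mu+\sigma y|^pf_0(y)\,dy$. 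A Jensen bound $a(\btheta)\ge|\mu+\sigma m_0|^p$ with $m_0=\int y f_0$, together with $a(\btheta)\ge\sigma^p v_0$ where $v_0:=\inf_c\int|y-c|^pf_0(y)\,dy>0$, yields $D^p(\btheta,\btheta_0)\le C\,(a(\btheta)+1)$, so $\hat w_{k,N}D^p(\hat\btheta_{k,N},\btheta_0)$ and $\hat w_{k,N}a(\hat\btheta_{k,N})$ are bounded; along any subsequence I extract limits $d_k,b_k$ with $0\le d_k\le Cb_k$ and $\sum_k b_k=M^*$. Since $\hat G_N^{\text{MWDE}}\cd G^*$, the positive‑weight components already assemble to $G^*$, so their $b_k$'s sum to $\int a\,dG^*=M^*$; hence $b_k=0$, and therefore $d_k=0$, for every vanishing‑weight component, whence $\sum_k d_k=\int D^p(\btheta,\btheta_0)\,dG^*$. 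As this limit does not depend on the subsequence, the full sequence converges and Property~2 concludes.

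\textbf{Main obstacle.} The step needing the most care is the tightness argument in Part~1 --- deducing, purely from $W_p$‑closeness of the mixtures, that no essential mass of the mixing sequence escapes --- together with its counterpart in Part~3 for the estimator's low‑weight atoms; both rest on the quantile representation and on the elementary moment inequalities $|\mu|^p\le C\,a(\btheta)$ and $\sigma^p\le C\,a(\btheta)$. The remaining ingredients (continuity of $G\mapsto F(\cdot|G)$ across $\sigma=0$, the empirical‑Wasserstein strong law, identifiability, and existence/near‑minimality of the estimator) are either hypotheses or routine.
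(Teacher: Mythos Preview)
Your strategy for Parts~1 and~2 is essentially the paper's. For Part~2 the paper uses the identical triangle-inequality/minimizer bound $W_p(F^*,F(\cdot|\hat G_N))\le 2W_p(F_N,F^*)$ and then invokes Glivenko--Cantelli plus the SLLN for the $p$th moment together with Property~2. For Part~1 the paper extracts a subsequential limit of $G_m$ via Helly's lemma and argues that a sub-probability limit would contradict $W_p\to 0$, then concludes by identifiability; your compactification of the parameter vector plus the explicit ``no escaping component'' tightness argument (borrowed from the existence discussion) is the same idea carried out in coordinates, and is in fact somewhat cleaner than the paper's one-line appeal to what $W_p$ would do against a sub-probability limit. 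One small slip: you write ``since $F^{-1}(\cdot|G^*)$ is bounded'', but the quantile function of a nondegenerate location--scale mixture is unbounded; what you need (and what suffices) is only that $F^{-1}(t|G^*)$ is \emph{finite} for each $t\in(0,1)$, so that divergence of $F^{-1}(t|G_m)$ on a set of positive measure forces $W_p\to\infty$ by Fatou.

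Where your proposal genuinely goes beyond the paper is Part~3. The paper simply asserts that Conclusion~3 ``is implied by Conclusions~1 and~2'', but Conclusion~1 delivers only $\hat G_N\cd G^*$, i.e.\ weak convergence of the mixing distributions, whereas Conclusion~3 asserts $W_p(\hat G_N,G^*)\to 0$ on the parameter space, which by Property~2 additionally requires convergence of the $p$th moment of $\hat G_N$. A low-weight atom of $\hat G_N$ drifting to infinity is perfectly compatible with weak convergence but would destroy Wasserstein convergence, and the paper does not rule this out. Your moment-control argument---transferring the mixture-level moment convergence $\int|x|^p\,dF(x|\hat G_N)\to\int|x|^p\,dF(x|G^*)$ (which does follow from Part~2 and Property~2) to the mixing level via the inequalities $\sigma^p\le a(\btheta)/v_0$ and $|\mu|^p\le C\,a(\btheta)$, then showing that vanishing-weight components contribute zero to both the $a$-budget and the $D^p$-budget---closes exactly this gap. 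So your Part~3 is not an alternative route but a necessary completion of the paper's sketch.
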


\begin{proof}
We present these three conclusions in the current order which is easy to understand.
For the sake of proof, a different order is better.
For ease presentation, we write 
$F^* =  F(\cdot| G^*)$ and $\hat G = \hat G_N^{\text{MWDE}}$
in this proof.

We first prove the second conclusion. By the triangular inequality
and the definition of the  minimum distance estimator, we have
\[
W_p(F^*, F(\cdot|\hat{G}_N))
\leq
W_p(F_N, F^*)
+
W_p(F_N, F(\cdot|\hat{G}_N))
\leq
2 W_p(F_N, F^*).
\]
Note that $F_N$ is the empirical distribution and $F^*$ is the
true distribution, we have $F_N(x) \to F^*(x)$ uniformly in $x$
almost surely. At the same time, under the assumption that
$F_0(x)$ has finite $p$th moment, $F^*(x)$ also has finite $p$th
moment. The  $p$th moment of $F_N(x)$ converges to that of
$F^*(x)$ almost surely.
Given the ground distance $D(x, y) = |x - y|$, 
the $p$th moment in Wasserstein distance sense
is the usual moments in probability theory. 
By Property 2, we conclude $W_p(F_N, F(\cdot| G^*)) \to 0$
as both conditions there are satisfied.

Conclusion 3 is implied by Conclusions 1 and 2. With Conclusion 2
already established, we need only prove Conclusion 1 to complete
the whole proof. 
By Helly's lemma~\citep[Lemma 2.5]{van2000asymptotic}
again,  $G_m$ has a converging subsequence though the limit
can be a sub-probability measure.
Without loss of generality, we assume that $G_m$ itself converges
with limit $\tilde G$.
If $\tilde G$ is a sub-probability measure, so would be $F(\cdot | \tilde G)$.
This will lead to
\[
W_p(F(\cdot|G_m), F(\cdot|G^*)) 
\to 
W_p(F(\cdot|\tilde G), F(\cdot|G^*)) 
\neq 0
\]
which violates the theorem condition.
If $\tilde G$ is a proper distribution in $\GG_K$ and
\[
W_p(F(\cdot|\tilde G), F(\cdot|G^*))  = 0,
\]
then by identifiability condition, we have $\tilde G = G^*$.
This implies $G_m \to G^*$ and completes the proof.
\end{proof}

The multivariate normal mixture is another type of location-scale mixture.
The above consistency result of MWDE can be easily extended to finite 
multivariate normal mixtures.

\begin{thm}
\label{thm:NormalMixtureconsistency}
Consider the problem when $\mathcal{X}=\{x_1, \ldots, x_N\}$
are IID observations from a finite multivariate normal mixture distribution
of order $K$ and $\hat G_N^{\text{MWDE}}$ is the minimum Wasserstein
distance estimator defined by \eqref{obj:WDE_GMM}. 
Let the true mixing distribution be $G^*$.
The MWDE is consistent: $W_p(\hat G_N^{\text{MWDE}}, G^*) \to 0$  
as $N \to \infty$ almost surely.
\end{thm}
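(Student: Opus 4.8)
The plan is to follow the same three-part skeleton used to prove Theorem~\ref{thm:consistency}, reusing verbatim every step that is dimension-free and replacing the one step that genuinely exploited the univariate quantile identity (Property~3), which is unavailable in dimension $d\ge 2$. Concretely, I would establish, in this order: (2') $W_p(F(\cdot|G^*),F(\cdot|\hat G_N^{\text{MWDE}}))\to 0$ almost surely; (1') for any sequence $G_m\in\GG_K$, $W_p(F(\cdot|G_m),F(\cdot|G^*))\to 0$ implies $G_m\cd G^*$; and then (3') the stated consistency, which follows from (1') and (2') exactly as in Theorem~\ref{thm:consistency}.

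Step (2') transfers essentially unchanged. By the triangle inequality and the defining property of the minimum distance estimator, $W_p(F^*,F(\cdot|\hat G_N^{\text{MWDE}}))\le 2\,W_p(F_N,F^*)$, where $F_N$ is now the empirical measure on $\mathbb{R}^d$. The multivariate Glivenko--Cantelli theorem gives $F_N\cd F^*$ a.s., the strong law of large numbers gives $\int\|x\|^p\,F_N(dx)\to\int\|x\|^p\,F^*(dx)<\infty$ a.s.\ (all moments of a Gaussian mixture being finite), and Property~2 applied on the Polish space $(\mathbb{R}^d,\|\cdot\|)$ then yields $W_p(F_N,F^*)\to 0$ a.s. Existence of $\hat G_N^{\text{MWDE}}$ is argued as in the univariate case, using the parameter bounds of the next paragraph to rule out escaping mixing weights, diverging location vectors, and diverging covariance eigenvalues along a minimizing sequence, so that a subsequence converges to a proper mixing distribution in $\GG_K$ (degenerate, i.e.\ singular-covariance, components allowed).

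Step (1') is where the real work lies. Suppose $W_p(F(\cdot|G_m),F^*)\to 0$ and write $G_m=\sum_{k=1}^K w_{k,m}\{(\mu_{k,m},\Sigma_{k,m})\}$. By Property~2, $F(\cdot|G_m)\cd F^*$ and $M:=\sup_m\int\|x\|^p\,F(dx|G_m)<\infty$. Two elementary lower bounds on a single Gaussian component control the parameters: first, convexity of $\|\cdot\|^p$ and Jensen's inequality give $\int\|x\|^p\,N(dx;\mu,\Sigma)\ge\|\mu\|^p$; second, the pointwise bound $\tfrac12\|\mu+z\|^p+\tfrac12\|{-\mu}+z\|^p\ge\|z\|^p$ together with the symmetry $Z\overset{d}{=}-Z$ of $Z\sim N(0,\Sigma)$ gives $\int\|x\|^p\,N(dx;\mu,\Sigma)=\E\|\mu+Z\|^p\ge\E\|Z\|^p\ge(\E\|Z\|)^p\ge c_p\,\lambda_{\max}(\Sigma)^{p/2}$, the last step by projecting onto a top eigenvector of $\Sigma$ to reduce to a univariate Gaussian absolute moment. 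Hence $M\ge w_{k,m}\|\mu_{k,m}\|^p$ and $M\ge w_{k,m}\,c_p\,\lambda_{\max}(\Sigma_{k,m})^{p/2}$ for all $k,m$. Passing to a subsequence so that $w_{k,m}\to w_k$ for every $k$, we conclude that for each $k$ with $w_k>0$ the pair $(\mu_{k,m},\Sigma_{k,m})$ stays in a fixed compact subset of $\mathbb{R}^d\times\{\Sigma\succeq 0\}$, while components with $w_k=0$ contribute mass tending to $0$ and may be dropped. Therefore, along a further subsequence, $G_m\cd\tilde G$ for some probability measure $\tilde G\in\GG_K$; since $(\mu,\Sigma)\mapsto N(\mu,\Sigma)$ is weakly continuous (its characteristic function $\exp(it^\top\mu-\tfrac12 t^\top\Sigma t)$ is jointly continuous, including at singular $\Sigma$), we get $F(\cdot|G_m)\cd F(\cdot|\tilde G)$, which combined with $F(\cdot|G_m)\cd F^*$ forces $F(\cdot|\tilde G)=F^*$, hence $\tilde G=G^*$ by identifiability of finite multivariate normal mixtures. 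As every subsequence of $G_m$ has a further subsequence converging to $G^*$, we obtain $G_m\cd G^*$, which closes (1') and hence the theorem.

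The main obstacle I anticipate is not the moment estimates or the Helly selection step — those are routine once Property~2 is in hand — but the bookkeeping at the boundary of the parameter space: making sense of $F(\cdot|\tilde G)$ and of the weak continuity of $G\mapsto F(\cdot|G)$ when some limiting $\Sigma_k$ is merely positive semidefinite, and ensuring that the identifiability result invoked genuinely covers mixtures that may contain degenerate Gaussian components. As in the univariate treatment (which likewise permits $\sigma=0$), one handles this either by citing an identifiability statement broad enough to include the degenerate case or by checking directly that a degenerate limit cannot reproduce $F^*$ unless $G^*$ already has the matching structure; I would spell out whichever is cleaner for the specific identifiability reference being used.
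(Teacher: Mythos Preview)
Your proposal is correct but follows a genuinely different route from the paper. The paper reduces the multivariate case to the univariate one via projections: from $W_p(\Phi(\cdot|\hat G_N),\Phi(\cdot|G^*))\to 0$ it deduces, using Property~2 and the Cram\'er--Wold device, that $W_p(\Phi_{\baa}(\cdot|\hat G_N),\Phi_{\baa}(\cdot|G^*))\to 0$ for every unit vector $\baa$, where $\Phi_{\baa}(\cdot|G)$ is the univariate normal mixture obtained by projecting onto $\baa$; it then invokes Conclusion~1 of Theorem~\ref{thm:consistency} to obtain convergence of the projected mixing distributions $\hat G_{N,\baa}\cd G^*_{\baa}$ and concludes (informally) that $\hat G_N\cd G^*$. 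You instead re-run the three-step argument of Theorem~\ref{thm:consistency} directly in $\mathbb{R}^d$, replacing the quantile-based tail argument by the two elementary moment lower bounds $\int\|x\|^p\,N(dx;\mu,\Sigma)\ge\|\mu\|^p$ and $\int\|x\|^p\,N(dx;\mu,\Sigma)\ge c_p\,\lambda_{\max}(\Sigma)^{p/2}$ to prevent escaping parameters. The paper's projection trick is shorter and recycles the univariate result as a black box, but---as the paper itself concedes---the last step, recovering $\hat G_N\cd G^*$ from convergence of all one-dimensional projected \emph{mixing} distributions, is left informal and in fact needs something like your compactness-plus-identifiability argument to be made rigorous. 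Your direct route is longer but self-contained; the boundary issue you flag (identifiability when a limiting $\Sigma_k$ is singular) is equally present in both approaches.
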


The rigorous proof is long though the conclusion is obvious.
We offer a less formal proof based on several well known probability theory results:
\begin{enumerate}
\item[(I)] A multivariate random variable sequence $Y_n$ converges
in distribution to $Y$ if and only if $\baa^\tau Y_n$ converges
to $\baa^\tau Y$ for any unit vector $\baa$;
\item[(II)] If $Y$ is multivariate normal if and only if $\baa^\tau Y$ is normal for all $\baa$;
\item[(III)] The normal distribution has finite moment of any order.
\end{enumerate}

Let $X_m$ be a random vector with distribution $F(\cdot | G_m)$ for some $G_m \in \GG_K$, $m=0, 1, 2, \ldots$, in a general mixture model setting.
Suppose as $m \to \infty$, with the notation we introduced previously,
\[
W_p(X_m, X_0) \to 0.
\]
Then for any unit vector $\baa$, based on property 2 of the Wasserstein distance and the result (I), we can see that
\[
W_p(\baa^\tau X_m, \baa^\tau X_0) \to 0.
\]
Next, we apply this result to normal mixture so that $F(\cdot | G_m)$ becomes $\Phi(\cdot |G_m)$ which stands for a finite multivariate normal mixture with mixing distribution $G_m$.
In this case, $X_m$ is a random vector with distribution $\Phi(\cdot |G_m)$. 
Let $(\bmu_k, \Sigma_k)$ be generic subpopulation parameters.
We can see that the distribution of $\baa^\tau X_m$, $\Phi_\baa (\cdot |G_m)$ is a finite normal mixture with subpopulation parameters $(\baa^\tau \bmu_k, \baa^\tau \Sigma_k \baa)$, and mixing weights the same as those of $G_m$.
Let the mixing distributions after projection be $G_{m, \baa}$ and $G_{0,\baa}$.

By the same argument in the proof of Theorem \ref{thm:consistency},
\[
W_p(\Phi (\cdot |\hat G_N), \Phi (\cdot |G^*)) \to 0
\]
almost surely as $N\to \infty$.
This implies
\[
W_p(\Phi_\baa (\cdot |\hat G_N), \Phi_\baa(\cdot |G^*)) \to 0
\]
almost surely as $N\to \infty$ for any $\baa$.
Hence, by Conclusion 1 of Theorem \ref{thm:consistency}, $\hat G_{N, \baa} \cd \hat G^*_\baa$
almost surely for any unit vector $\baa$.
We therefore conclude the consistency result: $\hat G_{N} \cd \hat G^*$ almost surely.

\subsection{Numerical Solution to MWDE}
\label{sec:WDE_numerical_computation}
Both in applications and in simulation experiments, we need an effective way to compute the MWDE.
We develop an algorithm that leverages the explicit form of the Wasserstein distance between two measures on $\mathbb{R}$ for the numerical solution to the MWDE.
The strategy works for any $p$-Wasserstein distance
but we only provide specifics for $p=2$ as it is the most widely used. 

Let $Y$ be a random variable with distribution $ f_0(\cdot)$.
Denote  the mean and variance of $Y$ by $\mu_0=\mathbb{E}(Y)$ and 
$\sigma_0^2=\text{Var}(Y)$.
Recall that $G = \sum_{k=1}^K w_k \{ (\mu_k, \sigma_k)\}$.
Let $x_{(1)}\leq x_{(2)}\leq\cdots\leq x_{(N)}$ be the order statistics,
$\overline{x^2} = N^{-1} \sum_{n=1}^N x_n^2$,
and
$\xi_n = F^{-1}(n/N| G)$ be the $(n/N)$th quantile of the mixture for $n=0, 1, \ldots, N$. 
Let 
\begin{equation*}
T(x) = \int_{-\infty}^{x} tf_0(t)dt
\end{equation*}
and define
\begin{align*}
\Delta F_{nk} 
&=
 F_0\Big (\frac{\xi_{n}-\mu_k}{\sigma_k}\Big) 
 - F_0\Big (\frac{\xi_{n-1}-\mu_k}{\sigma_k}\Big),
 \\
 \Delta T_{nk}
 &=
  T\Big (\frac{\xi_{n}-\mu_k}{\sigma_k}\Big) 
 - T\Big (\frac{\xi_{n-1}-\mu_k}{\sigma_k}\Big).
\end{align*}

When $p=2$, we expand the squared $W_2$ distance, $\mathbb{W}_N$,
between the empirical distribution and  $F(\cdot | G)$ as follows:
\begin{eqnarray*}
\label{eq:W2_GMM_obj}
\mathbb{W}_N(G) 
&=&
W_2^2(F_N(\cdot), F(\cdot|G)) \\
&=&
 \int_{0}^{1} \{ F_N^{-1}(t) - F^{-1}(t|G) \}^2 dt\\
&=&\overline{x^2} + 
\sum_{k=1}^K w_k \{\mu_k^2 + \sigma_k^2(\mu_0^2+\sigma_0^2)+2\mu_k\sigma_k\mu_0\}\\
&&- 2 \sum_k w_k 
\big \{
\mu_k \sum_{n=1}^{N} x_{(n)} \Delta F_{nk}
+
\sigma_k \sum_{n=1}^{N} x_{(n)}\Delta T_{nk}
\big \}.
\end{eqnarray*}
 
The MWDE minimizes $\mathbb{W}_N(G)$ with respect to $G$. 
The mixing weights and subpopulation scale parameters in this
optimization problem have natural constraints.
We may replace the optimization problem with an unconstrained one
by the following parameter transformation:
\begin{align*}
&\sigma_k = \exp(\tau_k), \\
&w_k = \exp(t_k)/\{\sum_{j=1}^{K} \exp(t_j)\}
\end{align*}
for $k=1, 2, \ldots, K$. 
We may then minimize $\mathbb{W}_N$ with respect to
$\{(\mu_k,\tau_k, t_k): k=1,2,\ldots,K\}$ over the unconstrained space $\mathbb{R}^{3K}$.
Furthermore, we adopt  the quasi-Newton BFGS algorithm
~\citep[Section 6.1]{nocedal2006numerical}.
To use this algorithm, it is best to provide the gradients of $\mathbb{W}_N(G)$,
which are given as follows:
\begin{align*}
&\frac{\partial}{\partial t_j} \mathbb{W}_N
=\sum_{k=1}^{K}\left\{\frac{\partial w_k}{\partial t_j}\frac{\partial}{\partial w_k}\mathbb{W}_N\right\}
=\sum_{k} w_j (\delta_{jk}-w_k)\frac{\partial}{\partial w_k}\mathbb{W}_N,  
\\
&\frac{\partial}{\partial \mu_j} \mathbb{W}_N
=2w_j\{\mu_j+\sigma_j\mu_0-\sum_{n=1}^Nx_{(n)}\Delta F_{nj}\},
\\
&\frac{\partial}{\partial \tau_j} \mathbb{W}_N
=2w_j\{\sigma_j(\mu_0^2+\sigma_0^2)
 +\mu_j\mu_0-\sum_{n=1}^Nx_{(n)}\Delta T_{nj}\}\frac{\partial \sigma_j}{\partial \tau_j}
\end{align*}
for $j=1,2,\ldots,K$ where
\begin{align*}
\frac{\partial}{\partial w_k} \mathbb{W}_N
=&\ 
 \{\mu_k^2 + \sigma_k^2(\mu_0^2+\sigma_0^2)
+2\mu_k\sigma_k\mu_0\} -2\sum_{n=1}^{N-1} \{x_{(n+1)}-x_{(n)}\}\xi_nF(\xi_n|\mu_k,\sigma_k)\\
&
- 2\big \{ \mu_k \sum_{n=1}^{N} x_{(n)} \Delta F_{nk}
+
\sigma_k \sum_{n=1}^{N} x_{(n)}\Delta T_{nk}
\big \}.
\end{align*}

Since $\mathbb{W}_N(G)$ is non-convex, the algorithm may find a local minimum of $\mathbb{W}_N(G)$ instead of a global minimum as required for MWDE.
We use multiple initial values for the BFGS algorithm, and regard the one 
with the lowest $\mathbb{W}_N(G)$ value as the solution.
We leave the algebraic details in the Appendix. 

This algorithm involves computing the quantiles $\xi_{n}$ and $\Delta T_{nj}$ repeatedly which may 
lead to high computational cost. Since $\xi_n\in [\min_{k} F^{-1}(n/N|\btheta_k), \max_{k} F^{-1}(n/N|\btheta_k)]$, it can be found efficiently via a bisection method. 
Fortunately, $T(x)$ has simple analytical forms
under two widely used location-scale mixtures which make
the computation of $\Delta T_{nj}$ efficient:
\begin{enumerate}
\item 
When $f_0(t)=(2\pi)^{-1/2}\exp(-x^2/2)$ which is the density function of the standard normal, 
we have  $t f_0(t)=-f_0'(t)$.
In this case, we find
$$ T(x) = -f_0(x).$$
\item
For finite mixture of location-scale logistic distributions, we have
\[
f_0(t)=\frac{\exp(-x)}{(1+\exp(-x))^2}
\]
and
\begin{equation}
\label{eq:Tx_logistic}
T(x) = \int_{-\infty}^{x} tf_0(t) dt=\frac{x}{1+\exp(-x)} -\log(1+\exp(x)).
\end{equation}
\end{enumerate}

\subsection{Penalized Maximum Likelihood Estimator}
A well investigated inference method under finite mixture of location-scale families 
is the pMLE \citep{tanaka2009strong,chen2008inference}. 
\citet{chen2008inference} consider this approach for finite normal mixture models. 
They recommend the following penalized log-likelihood function
\begin{equation*}
p\ell_N(G|\mathcal{X}) 
= 
\ell_N(G|\mathcal{X}) - a_N\sum_{k} 
\left \{s_x^2/\sigma_k^2 + \log \sigma_k^2 \right \}
\end{equation*}
for some positive $a_N$ and sample variance $s_{x}^2$. 
The log-likelihood function is given in~\eqref{eq:log_likelihood}.
They suggest to learn the mixing distribution $G$ via pMLE defined as
\begin{equation*}
\label{eq:pMLE}
    \hat{G}_{N}^{\text{pMLE}} = \argsup p\ell_N(G|\mathcal{X}).
\end{equation*}
The size of $a_N$ controls the strength of the penalty and a recommended value is $N^{-1/2}$. 
Regularizing the likelihood function via a penalty function fixes the problem 
caused by degenerated subpopulations (i.e. some $\sigma_k=0$).
The pMLE is shown to be strongly consistent when the number of components has a known upper bound under the finite normal mixture model. 

The penalized likelihood approach can be easily extended to finite mixture of location-scale families. 
Let $f_0(\cdot)$ be the density function in the location-scale family as before. 
We may replace the sample variance $s_x^2$ in the penalty function by any 
scale-invariance statistic such as the sample inter-quartile range.
This is applicable even if the variance of $f_0(\cdot)$ is not finite.

We can use the EM algorithm for numerical computation.
Let $\mathbf{z}_n = (z_{n1}, \ldots, z_{nK})$ be the membership vector of the $n$th observation. 
That is, the $k$th entry of $\mathbf{z}_n$ is 1 when the response value $x_n$ is 
an observation from the $k$th subpopulation and 0 otherwise. 
When the complete data $\{(\mathbf{z}_n, x_n), n=1,2,\ldots,N\}$ are available, 
the penalized complete data likelihood function of $G$ is given by
\[
    p\ell_{N}^c(|\mathcal{X}) 
    = \sum_{n=1}^{N} \sum_{k=1}^K z_{nk}
    \log \left\{\frac{w_k}{\sigma_k} f_0\Big (\frac{x_i-\mu_k}{\sigma_k}\Big ) \right\}
    -a_N \sum_{k} \big  \{ s_x^2/\sigma_k^2 + \log (\sigma_k^2) \big \}.
\]
Given the observed data $\Xcal$ and proposed mixing distribution $G^{(t)}$, 
we have the conditional expectation
\[
w_{nk}^{(t)} 
= \mathbb{E}(z_{nk}| \Xcal, G^{(t)})
= \frac{w_k^{(t)} f(x_n|\mu_k^{(t)}, \sigma_k^{(t)})}
           {\sum_{j=1}^{K} w_j^{(t)} f(x_n| \mu_j^{(t)},\sigma_j^{(t)})}.
\]
After this E-step, we define
\begin{align*}
    Q(G|G^{(t)}) 
     =& 
    \sum_{n=1}^{N} \sum_{k=1}^K w_{nk}^{(t)}\log \left\{\frac{w_k}{\sigma_k}f_0
  \left(\frac{x_n-\mu_k}{\sigma_k}\right) \right\}
    -a_N \sum_{k} \big  \{s_x^2/\sigma_k^2 + \log (\sigma_k^2) \big \}.
\end{align*}
Note that the subpopulation parameters are well separated in $Q(\cdot|\cdot)$.
The M-step is to maximize $Q(G|G^{(t)}) $ with respect to $G$. 
The solution is given by the mixing distribution $G^{(t+1)}$ with mixing weights
$$
w_{k}^{(t+1)} =   N^{-1} \sum_{n=1}^{N} w_{nk}^{(t)}
$$
and the subpopulation parameters 
\begin{equation}
\label{eq:m_step}
\btheta_{k}^{(t+1)}
= \argmin_{\theta}
\Big \{
 \sum_{n} w_{nk}^{(t)} 
\{\log \sigma - f(x_n|\btheta)\} + a_N\{s_x^2/\sigma^2 + \log\sigma^2\}
\Big \}
\end{equation}
with the notational convention $\btheta = (\mu, \sigma)$.

For general location-scale mixture, the M-step \eqref{eq:m_step} may not have a closed form solution
but it is merely a simple two-variable function.
There are many effective algorithms in the literature to solve this optimization problem.
The EM-algorithm for pMLE increases the value of the penalized likelihood after each iteration. 
Hence, it should converge as long as the penalized likelihood function has an upper bound.
We do not give a proof as it is a standard problem.

\section{Experiments}
\label{sec:exp}
We now study the performance of MWDE and pMLE under finite location-scale mixtures.
We explore the potential advantages of the MWDE and quantify its efficiency loss, 
if any, by simulation experiments. 
Consider the following three location-scale families~\citep{chen2020homogeneity}:
\begin{enumerate}
\item 
Normal distribution: $f_0(x)=(2\pi)^{-1/2}\exp(-x^2/2)$. 
Its mean and variance are given by $\mu_0=0$ and $\sigma_0^2 = 1$.
\item 
Logistic distribution: $f_0(x) = {\exp(-x)}/{(1+\exp(-x))^2}$.
Its mean and variance are given by $\mu_0=0$ and  $\sigma_0^2 = {\pi^2}/{3}$.
\item 
Gumbel distribution (type I extreme-value distribution): $f_0(x)= \exp(-x-\exp(-x))$.
Its mean and variance are given by 
$\mu_0=\gamma$ and $\sigma_0^2 = {\pi^2}/{6}$
where $\gamma$ is the Euler constant.
\end{enumerate}

We will also include a real data example to compare the image segmentation result of using the MWDE and pMLE.

\subsection{Performance Measure}

For vector valued parameters, the commonly used performance metric of their estimators is 
the mean squared error (MSE). 
A mixing distribution with finite and fixed support points can be regarded as 
a real-valued vector in theory.
Yet the mean squared errors of the mixing weights, the subpopulation means, 
and the subpopulation scales are not comparable in terms of the learned finite mixture.
In this study, we use two performance metrics specific for finite mixture models. 
Let $\hat G$ and $G^*$ be the learned mixing distribution and the true mixing distribution.
We use $L_2$ distance between the learned mixture and the true mixture
as the first performance metric. 
The $L_2$ distance between two mixtures $f(\cdot|G)$ and $f(\cdot|\tilde G)$ is defined to be
\begin{equation*}
L_2(f(\cdot|G),f(\cdot|\tilde G)) 
= \{
\bw^{\tau} S_{GG} \bw - 2\bw^{\tau} S_{G\tilde G}  \tilde{\bw} 
+ \tilde{\bw}^{\tau} S_{\tilde G \tilde G} \tilde{\bw} \}^{1/2}
\end{equation*}
where $S_{GG}, S_{G\tilde G}$ and $S_{\tilde G \tilde G}$
are three square matrices of size $K \times K$ with their $(n,m)$th
elements given by
\[
\int f(x|\btheta_n)f(x|\btheta_m) dx,
\quad\int f(x|\btheta_n)f(x|\tilde\btheta_m) dx ,
\quad\int f(x|\tilde\btheta_n)f(x|\tilde\btheta_m) dx.
\]

Given an observed value $x$ of a unit from the true mixture population, 
by Bayes' theorem, the most probably membership of this unit is given by
\[
 k^*(x) = \argmax_{k} \{w_k^* f^*(x | \btheta_k^*)\}.
\]
Following the same rule, if $\hat G$ is the learned mixing distribution,
then the most likely membership of the unit with observed value $x$ is
\[
\hat k(x) = \argmax_{k} \{\hat w_k f(x|\hat \btheta_k)\}.
\]
We cannot directly compare $k^*(x)$ and $\hat k(x)$
because the subpopulation themselves are not labeled. 
Instead, the adjusted rand index (ARI) is a good performance metric
for clustering accuracy. 
Suppose the observations in a dataset are divided into $K$ clusters
$A_1, A_2, \ldots, A_K$ by one approach,
and 
$K'$ clusters $B_1, B_2, \ldots, B_{K'}$ by another. 
Let $N_i = \# (A_i), ~ M_j = \# (B_j), ~~ N_{ij} = \# (A_i B_j)$ for 
$i, j=1, 2, \ldots, K$, where $\#(A)$ is the number of units in set $A$.
The ARI between these two clustering outcomes is defined to be
\[
\text{ARI}= \dfrac{
\sum_{i, j } \binom{N_{ij}}{2}
-  \binom{N}{2}^{-1} \sum_{i, j}\binom{N_{i}}{2}\binom{M_{j}}{2}}
{\frac{1}{2} \sum_{i}\binom{N_{i}}{2} + \frac{1}{2}  \sum_{j} \binom{M_{j}}{2}
- \binom{N}{2}^{-1} \sum_{i, j}\binom{N_{i}}{2}\binom{M_{j}}{2}}.
\]
When the two clustering approaches completely agree with each other, 
the ARI value is $1$. When data are assigned to clusters randomly, the 
expected ARI value is $0$.
ARI values close to 1 indicate a high degree of agreement. 
We compute ARI based on clusters formed by $k^*(x)$ and $\hat k(x)$.

For each simulation, we choose or generate a mixing distribution $G^{*(r)}$, then generate a random sample from mixture $f (x | G^{*(r)})$.
This is repeated $R$ times.
Let $\hat G^{(r)}$ be the learned $G$ based on the $r$th data set.
We obtain the two performance metrics as follows:

\begin{enumerate}
\item
Mean $L_2$ distance:
$$
\text{ML2} = R^{-1} \sum_{r=1}^R L_2( f (\cdot|\hat{G}^{(r)}), f(\cdot|G^{*(r)})).
$$

\item 
Mean adjusted rand index:
$$
\text{MARI} = R^{-1} \sum_{r=1}^R \mbox{ARI}(\hat{G}^{(r)}, G^{*(r)} ).
$$
\end{enumerate}
The lower the ML2 and the higher the MARI, the better the estimator
performs.

\subsection{Performance under Homogeneous Model}

The homogeneous location-scale model is a special mixture model 
with a single subpopulation $K=1$. 
Both MWDE and MLE are applicable for parameter estimation. 
There have been no studies of MWDE in this special case in the literature.
It is therefore of interest to see how MWDE performs under this model.

Under three location-scale models given earlier, the MWDE has closed analytical forms.
Using the same notation introduced, their analytical forms are as follows.

\begin{enumerate}
  \item 
  Normal distribution:
\[
   \hat{\mu}^{\text{MWDE}} 
   = \bar{x},
   ~\hat{\sigma}^{\text{MWDE}} = \sum_{n=1}^N x_{(n)} \left\{f_0(\xi_{n-1})-f_0(\xi_n)\right\}.
\]

\item 
Logistic distribution:
\begin{equation*}
    \hat{\mu}^{\text{MWDE}}   = \bar{x},
    ~\hat{\sigma}^{\text{MWDE}} 
    = \frac{3}{\pi^2}\sum_{n=1}^N x_{(n)} \left\{T(\xi_n)-T(\xi_{n-1})\right\}
\end{equation*}
where $T(x)$ is given in~\eqref{eq:Tx_logistic}.

\item 
Gumbel distribution:
\begin{equation*}
    \hat{\mu}^{\text{MWDE}} = \{1-\gamma r\}^{-1}\{\bar{x}-\gamma T\},
     ~\hat{\sigma}^{\text{MWDE}} = T - r \hat{\mu}^{\text{MWDE}}
\end{equation*} 
where 
\[
T=\{\gamma^2 + \pi^2/6\}^{-1}\sum_{n=1}^N x_{(n)} \int_{\xi_{n-1}}^{\xi_{n}} tf_0(t)dt
\]
and $r=\gamma/(\gamma^2+\pi^2/6)$.
\end{enumerate}

The MLEs under the logistic and Gumbel distributions do not have an easy to use analytical form.
We  employ a numerical optimization program to solve for MLE.
We generate samples of sizes between $N=10$ to $N=100000$ with $R = 1000$ repetitions. 
Under the homogeneous model, it is most convenient to compute the MSE 
of the location and scale parameters separately.
Due to invariance property, we generate data from distributions
with $\mu=0$ and $\sigma = 1$.
The simulation results are summarized as plots in Figure~\ref{fig:homogeneous_MWDEvsMLE}.
\begin{figure}[htpb]
 \centering      
\subfloat[Normal]{\includegraphics[width=0.3\textwidth]{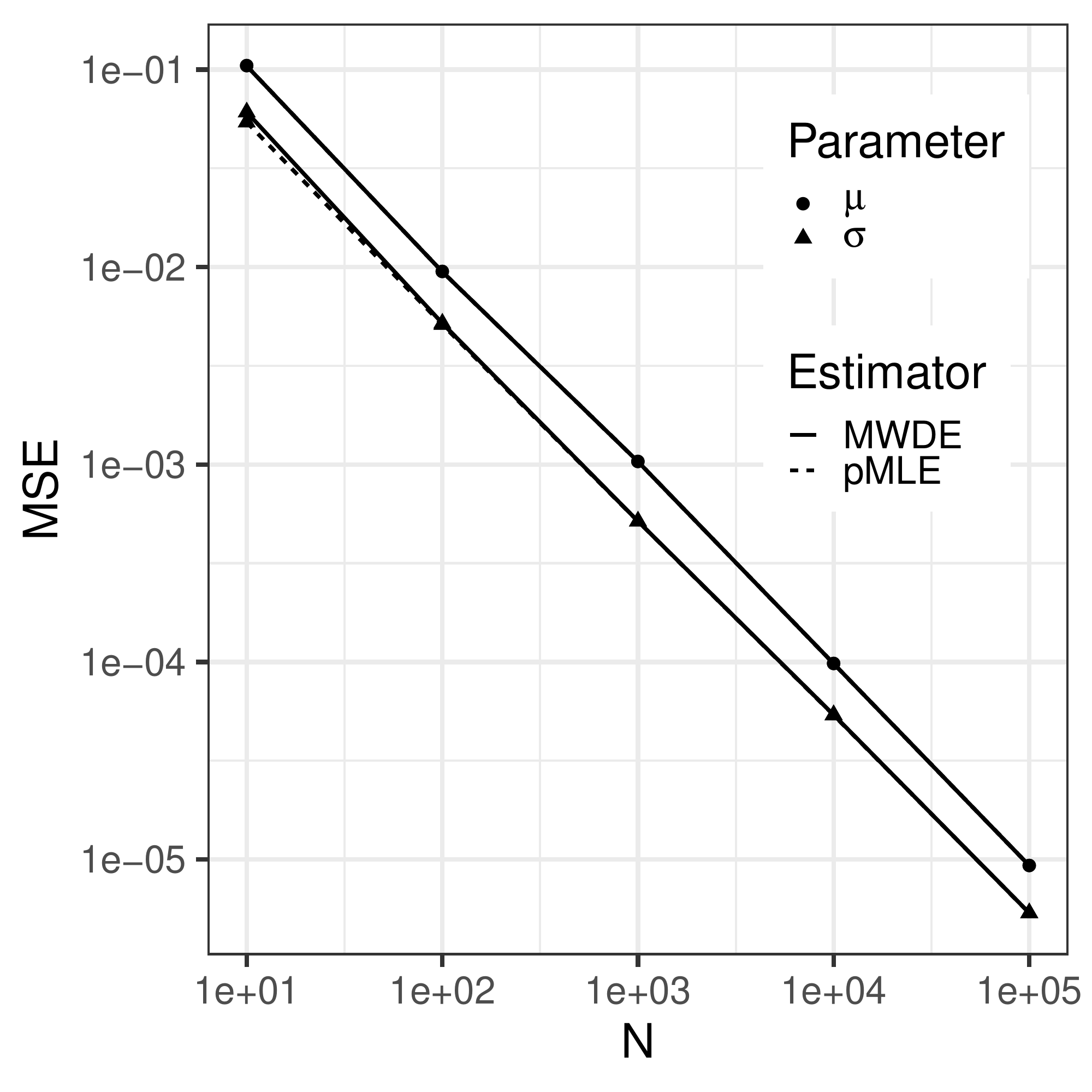}}
\subfloat[Logistic]{\includegraphics[width=0.3\textwidth]{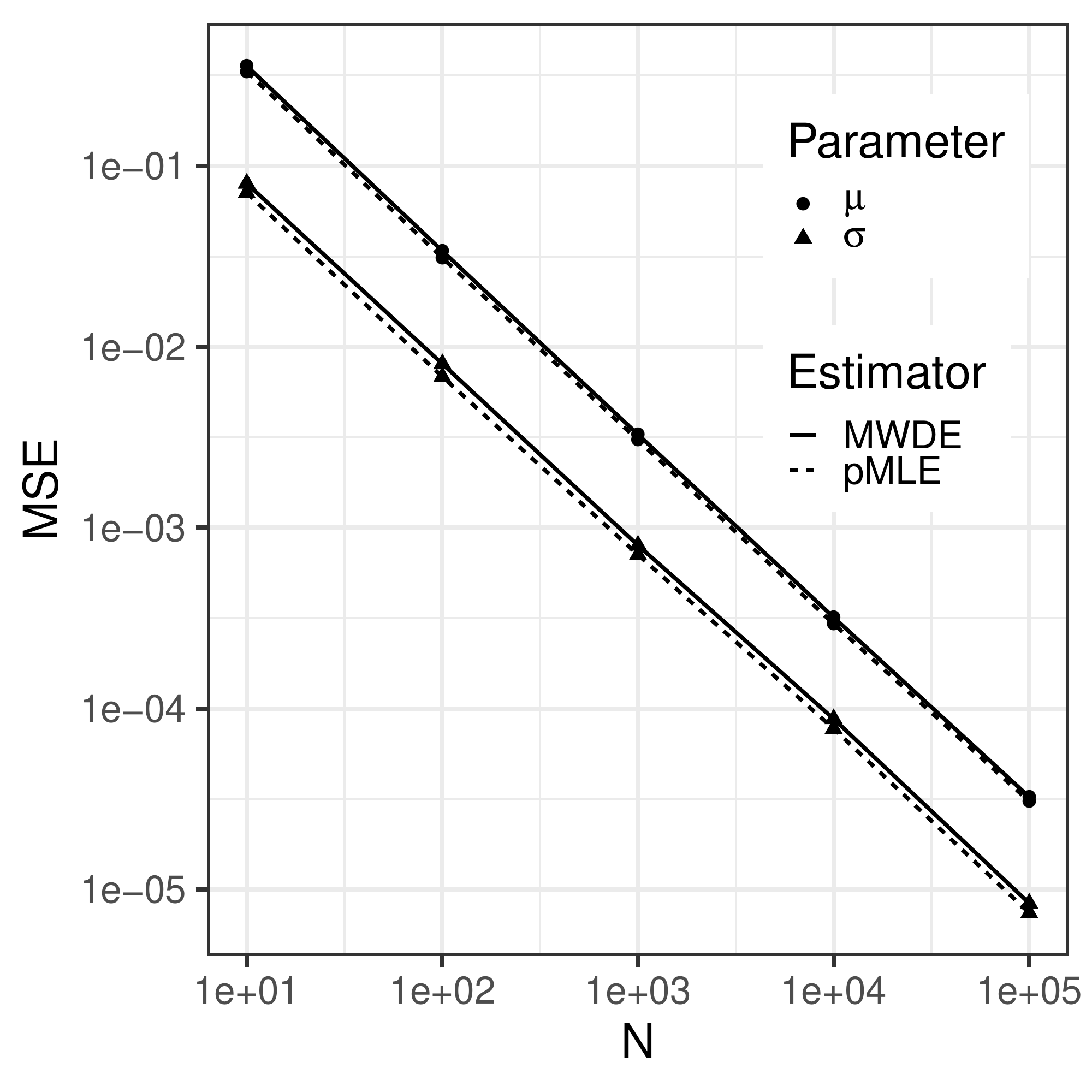}}
\subfloat[Gumbel]{\includegraphics[width=0.3\textwidth]{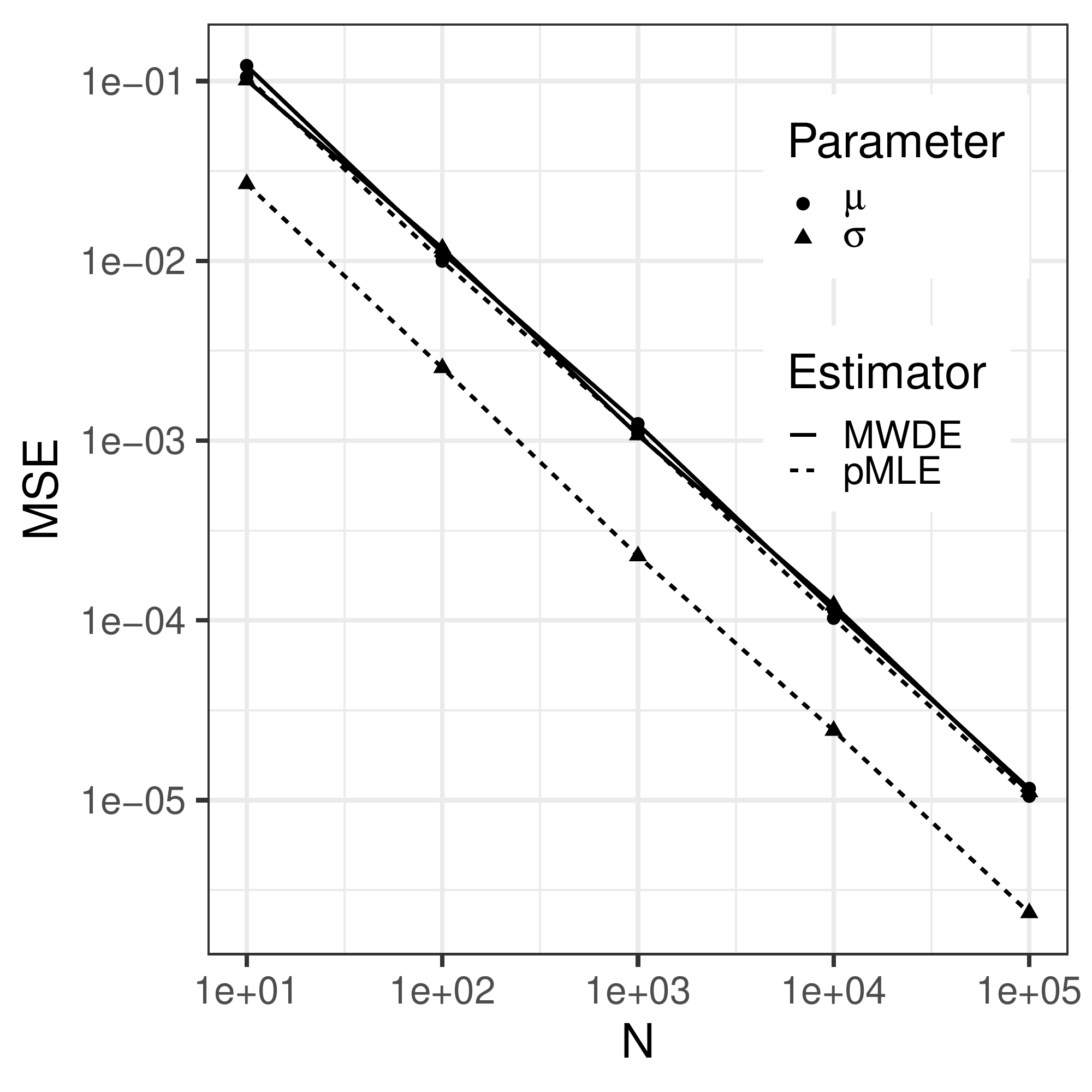}}
\caption{The MSEs of the MWDE and MLE for location and scale parameters 
versus sample size $N$ under different homogeneous models.}
\label{fig:homogeneous_MWDEvsMLE}
\end{figure}
Both the x and y axes in these plots are in logarithm scale.
For both MLE and MWDE, their log-MSE and $\log (N)$ values are
close to the straight lines with slope $- 1$. 
This phenomenon indicates that both estimators have the expected convergence rates $O(N^{-1/2})$ as the sample size $N \to \infty$.

The performance of the estimators for the location parameter and scale parameter are different.
For the location parameter under all three models, the lines formed by MLE and MWDE are nearly indistinguishable though the MLE is always below the MWDE. 
For the scale parameter $\sigma$, the MLE is also more efficient than the MWDE but the difference is negligible under the normal and logistic models.
Under the Gumbel model, the MWDE is less efficient.

In summary, using MWDE under a homogeneous model may not be preferred but may be acceptable under the normal and logistic models.
We do not investigate the performance of MWDE under Gumbel mixture due to its efficiency loss under the homogeneous model.
With these observations, we move to its performance under finite location-scale mixtures.

\subsection{Efficiency and Robustness under Finite Location-Scale Mixtures}
We next study the efficiency and robustness of the MWDE for learning finite location-scale mixtures.
Since the MLE is not well-defined, we compare the performance of MWDE with the pMLE~\citep{chen2009inference} instead. 
We compare their performances when the mixture model is correctly specified,
when the data is contaminated, or when the model is mildly misspecified.

\subsubsection{Efficiency}
A widely employed two-component mixture model~\citep{cutler1996minimum,zhu2016two} 
has a density function in the following from:
\be
\label{eq:two-component}
f(x | G) = p f(x | 0, a)+(1-p)f(x | b, 1)
\ee
for some density function $f(\cdot | \btheta)$ from a location-scale family.
Namely, we have $K=2$ is known, the mixing weights be $w_1 = p, w_2 = 1-p$, 
and subpopulation parameters be $\btheta_1 = (0, a)$ and $\btheta_2 = (b, 1)$.
By choosing different combinations of $p$, $a$, and $b$, we obtain 
mixtures with different properties.
Due to the invariance property, we need only consider the case where one of 
the location parameters is $0$, and one of the scale parameter is $1$. 

We generate samples from $f(x| G)$ according to the following scheme: 
generate an observation $Y$ from distribution with density function $f_0 (x)$ and let
\begin{equation}
\label{eq:data_generation}
X = \begin{cases}
    a Y, & \text{with probability } p; \\
    Y + b, & \text{otherwise.}
\end{cases}
\end{equation}
We can easily see that the distribution of $Y$ is $f(x|G)$ specified earlier.

The level of difficulty to precisely estimate the mixture largely depends on the degree of overlap between the subpopulations. 
Let 
\[
o_{j|i} 
= \mathbb{P}\big ( 
w_{i}f(X|\mu_{i},\sigma_{i}) < w_{j}f(X|\mu_{j},\sigma_{j}) 
 |X \sim f(x|\mu_{i},\sigma_{i})
 \big ).
\]
This is the probability of a unit from subpopulation $i$ misclassified as a unit in
subpopulation $j$ by the maximum posterior rule. 
The degree of overlap between the $i$th and $j$th subpopulations is therefore 
\begin{equation}
\label{eq:pairwise_overlap}
o_{ij} = o_{j|i} + o_{i|j}.
\end{equation}

We employ the following $a$, $b$, and $p$ values in our simulation experiments:
\begin{enumerate}
\item 
mixing proportion $p=0.15, 0.25, 0.5, 0.75, 0.85$;
\item 
scale of the first subpopulation $a^2=1, 2$;
\item 
location parameter $b$ values such that $o_{12} = 0.03, 0.1$. 
\end{enumerate}
The combination of these choices leads to $24$ mixtures with various shapes.
The sample size $N$ in our experiments
is chosen to be $100$, $500$, and $1000$ respectively.

We obtain the average $L_2$ distance (ML2) and adjusted rand index (MARI) based on  
$R=1000$ repetitions on data generated from normal and logistic mixture distributions 
as specified by \eqref{eq:data_generation}.
Figures~\ref{fig:correct_normal_efficiency} and 
 \ref{fig:correct_logistic_efficiency}, respectively, contains
plots of ML2 and MARI of the WMDE and pMLE estimators 
against sample size $N$ under these two models. 
\begin{figure}[htbp]
\centering      
\subfloat[$L_2$ distance]
{\includegraphics[width=\textwidth]
{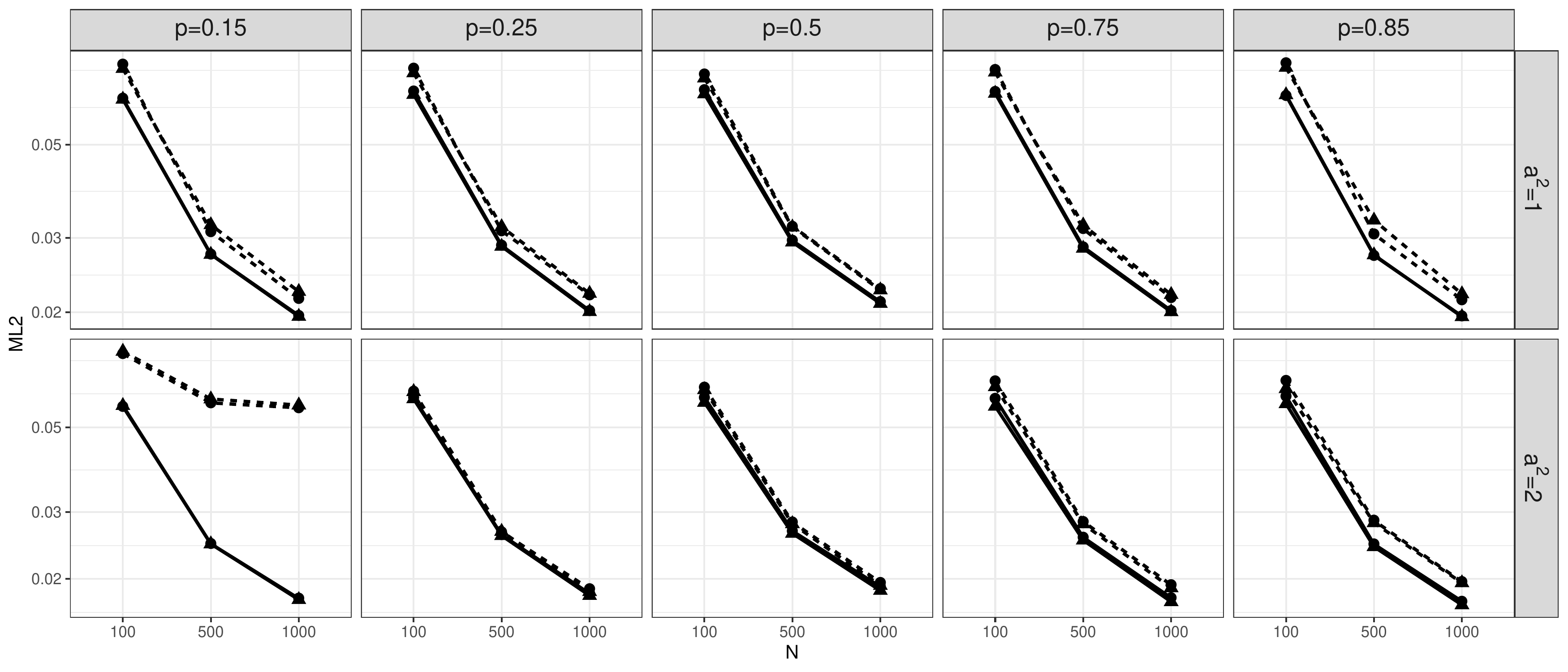}}\\
\subfloat[Adjusted rand index]
{\includegraphics[width=\textwidth]{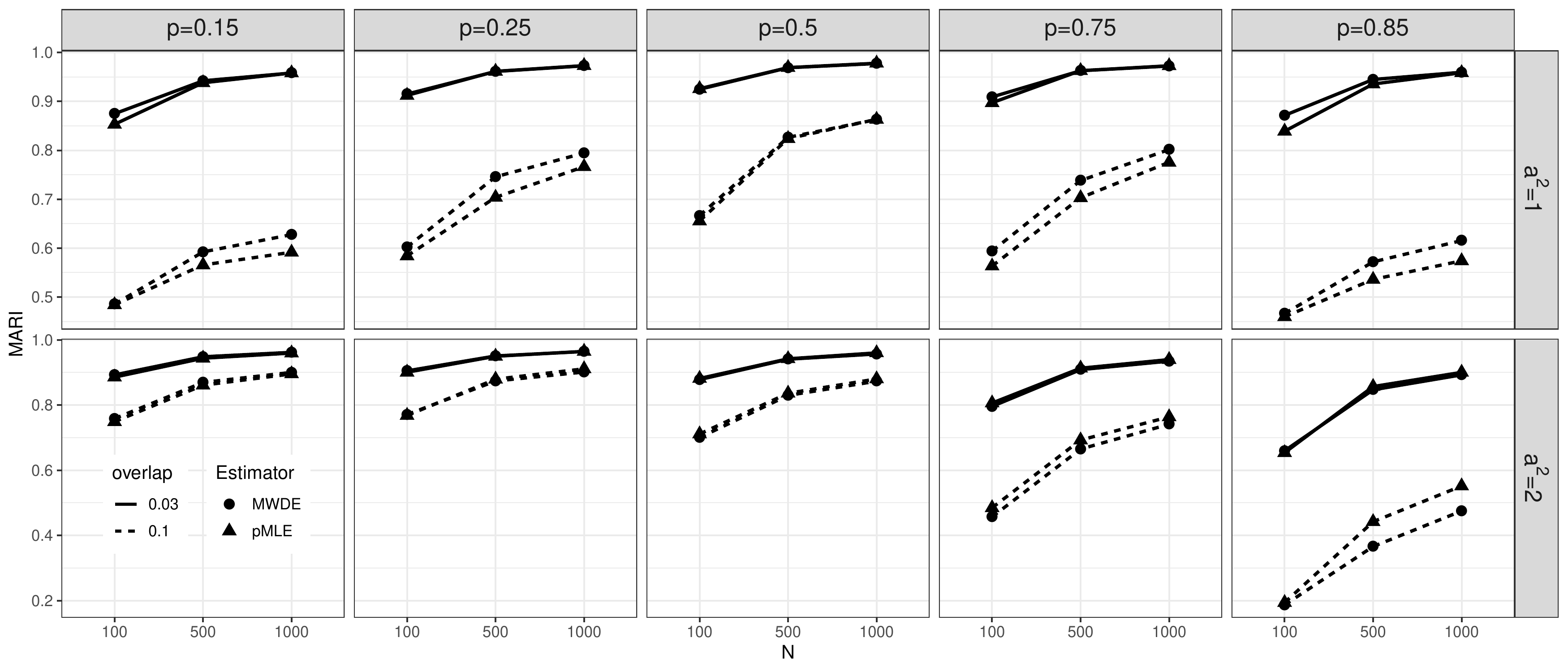}}
\caption{
Performances of pMLE and MWDE under 2-component normal mixture.
}
\label{fig:correct_normal_efficiency}
\end{figure}
\begin{figure}[htbp]
\centering      
\subfloat[$L_2$ distance]{\includegraphics[width=\textwidth]
{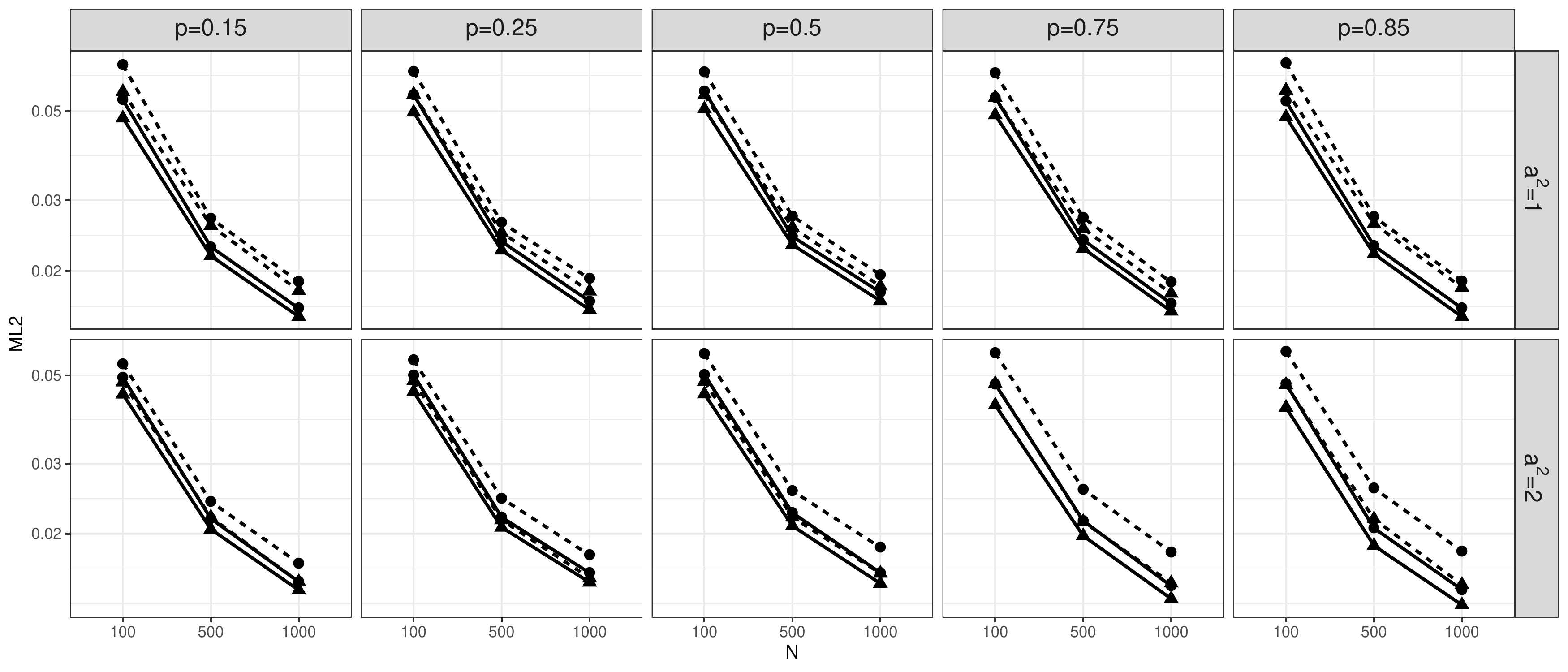}}\\
\subfloat[Adjusted rand index]{\includegraphics[width=\textwidth]
{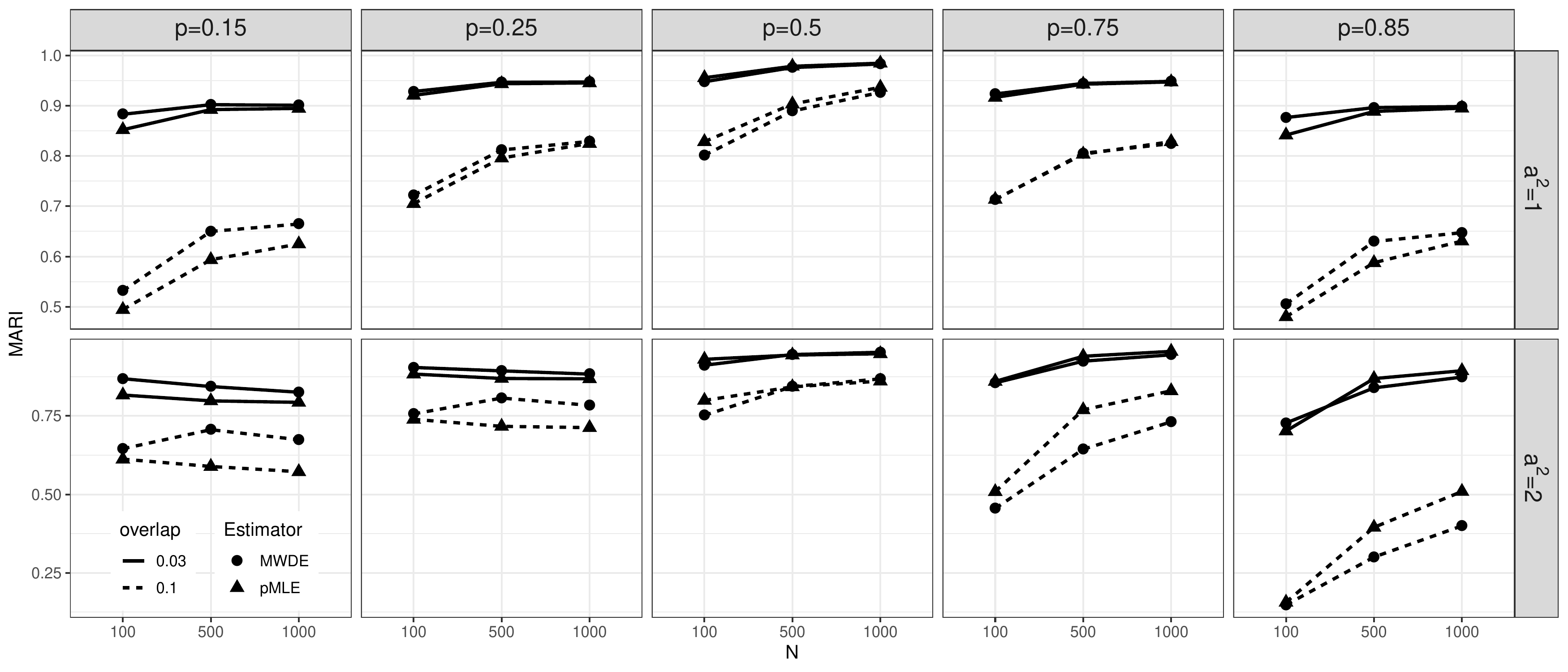}}
\caption{Performances of pMLE and MWDE under 2-component logistic mixture.}
\label{fig:correct_logistic_efficiency}
\end{figure}
We can see that when the sample size increases, ML2 of both estimators decrease 
and MARI of both estimators increase, 
supporting the theory that both WMDE and pMLE are consistent.
Under the normal mixture, these two estimators have nearly equal $L_2$ distances. 
The MWDE slightly outperforms pMLE in terms of the MARI, 
when the degree of overlap is large ($o_{12} = 0.1$) and the two subpopulations 
have both equal scale and highly unbalanced weights.
Under logistic mixture, as shown in plots (a) and (b) of
Figures~\ref{fig:correct_logistic_efficiency}, 
the pMLE always outperforms the MWDE in terms of the $L_2$ distance. 
In terms of the MARI, the MWDE is better when the scale parameters 
are equal and weights are highly unbalanced. 
When the scale parameters are different, the pMLE is better than MWDE when $p>0.5$ and worse than MWDE when $p<0.5$.

We next investigate the performance of the MWDE and pMLE
for learning 3-component normal mixtures.
We come up with 8 such distributions with different configurations.
The three subpopulations have the same or different weights and same or different scale parameter values. 
They lead to different degrees of overlap as defined by
\[
\texttt{MeanOmega} = \text{mean}_{1 \leq i < j \leq 3} \{ o_{ij} \}.
\]
where $o_{ij}$ is the degree of overlap between subpopulations
$i$ and $j$ in~\eqref{eq:pairwise_overlap}.
See Table \ref{tab:mixture_param} for detailed parameter values.

\begin{table}[htpb]
\centering
\small
\caption{Parameter values of 3-component normal mixtures.}
\begin{tabular}{lcccccccccc}
\toprule
&\texttt{MeanOmega} 
& $w_1$ & $w_2$ & $w_3$ & $\mu_1$ & $\mu_2$ & $\mu_3$ & $\sigma_1$ & $\sigma_2$ & $\sigma_3$ \\
\midrule
I&0.288 (low)&0.4&0.5&0.1&-2&0&1&0.3&2&0.4\\
II&0.367 (high)&0.4&0.5&0.1&-2&0&1&0.3&1&0.4\\    
\midrule
III&0.097 (low)&0.3&0.5&0.2&-3&0&3&1&1&1\\
IV&0.249 (high)&0.3&0.5&0.2&-2&0&2&1&1&1\\    
\midrule
V&0.148 (low)&1/3&1/3&1/3&-1&0&1&1.5&0.1&0.5\\
VI&0.267 (high)&1/3&1/3&1/3&-0.5&0&0.5&1.5&0.1&0.5\\
\midrule
VII&0.091 (low)&1/3&1/3&1/3&-3&0&3&1&1&1\\
VIII&0.226 (high)&1/3&1/3&1/3&-2&0&2&1&1&1\\
\bottomrule
\end{tabular}
\label{tab:mixture_param}
\end{table}

Figure~\ref{fig:correct_3comp} contains plots of the ML2 and MARI values of two estimators.
It is seen that the pMLE consistently outperforms MWDE in terms of ML2 but the difference is small.
The performances of the MWDE and pMLE are mixed in terms of MARI and the differences are small.
The pMLE is clearly better under the I and II.

\begin{figure}[!htpb]
\centering      
\subfloat[$L_2$ distance]{\includegraphics[width=0.45\textwidth]
{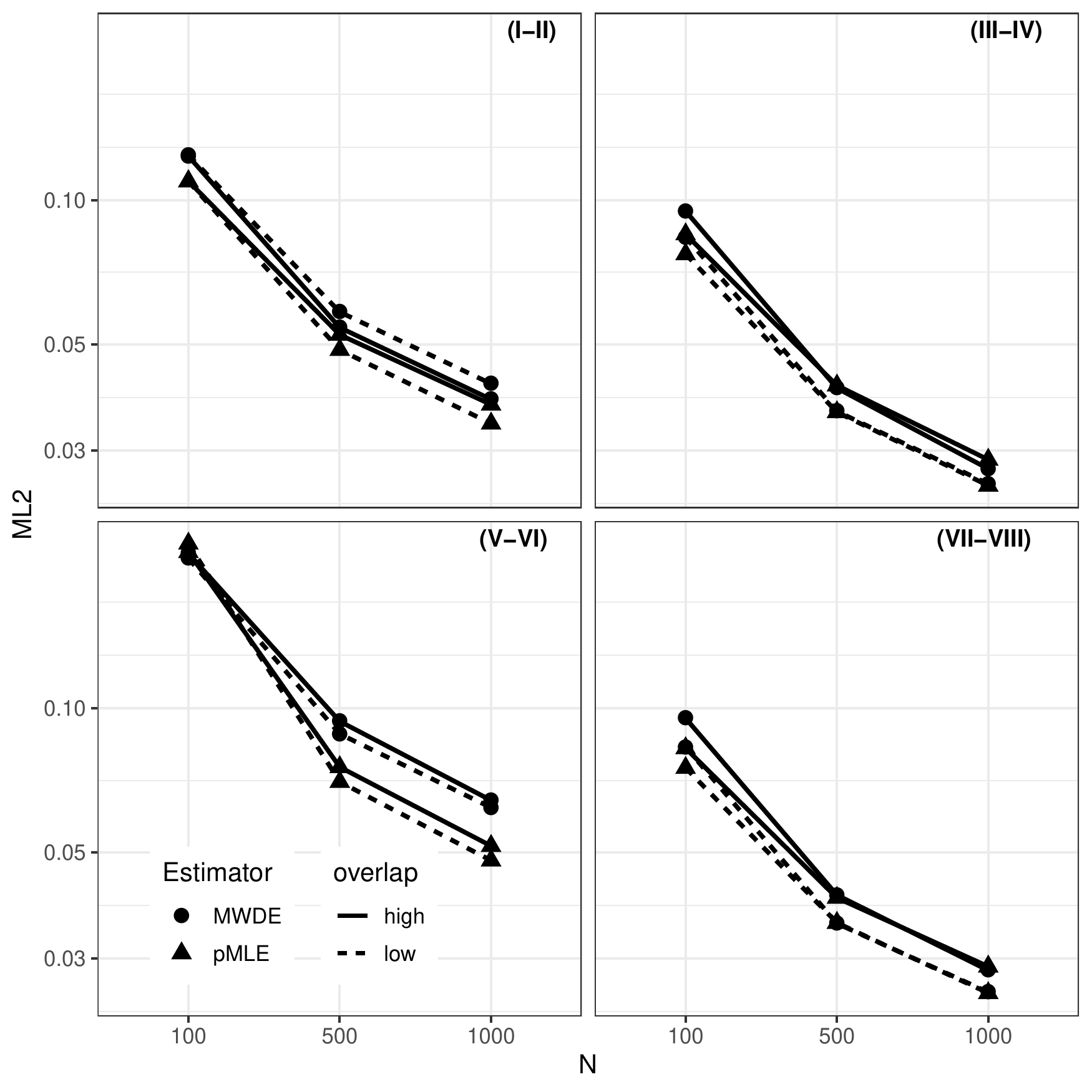}}
\subfloat[Adjusted rand index]
{\includegraphics[width=0.45\textwidth]{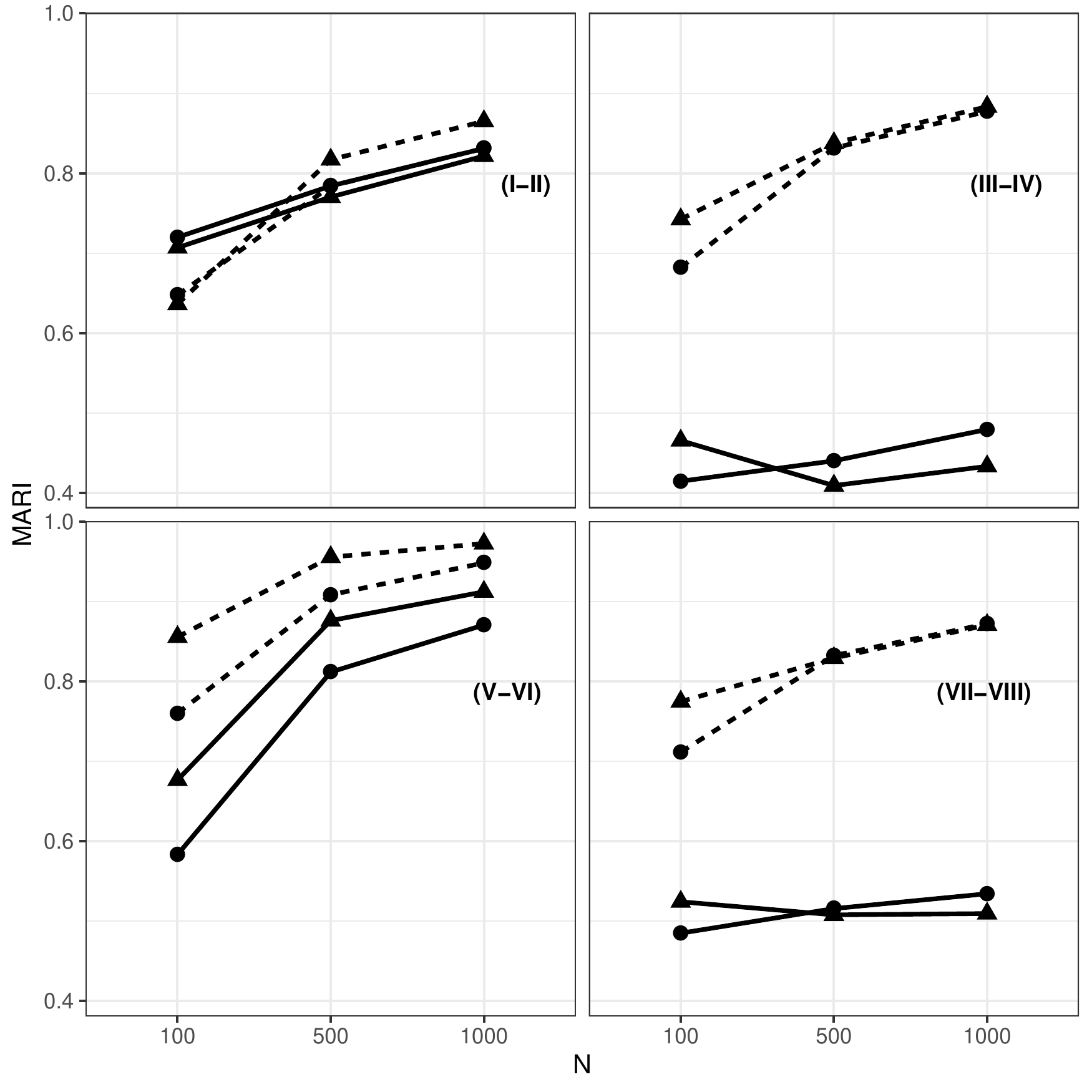}}
\caption{Performances of pMLE and MWDE under 3-component normal mixture.}
\label{fig:correct_3comp}
\end{figure}

\subsubsection{Robustness}
Robustness is another important property of estimators.
Sample mean is the most efficient unbiased
estimator of the population mean in terms of variance under normality
or some other well known parametric models. 
However, the value of the sample mean changes dramatically 
even if the data set contains merely a single extreme value.
Sample median offers a respectable alternative and still has high efficiency
across a broader range of parametric models.

In the context of learning finite location-scale mixture models, 
both pMLE and MWDE rely on a parametric distribution family
assumption through $f_0(x)$. 
How important is to have $f_0(x)$ correctly specified?
We shed some light into this problem by simulation experiments in this section.
We learn finite normal mixtures assuming $K=2$ but
generate data from the following distributions:
\begin{enumerate}
 \item
Mixture with outliers:
$ (1-\alpha)\{ p \phi (x | 0, a)+(1-p) \phi (x | b, 1)\}+ \alpha \phi(x; 8, 1)$ 
with $\alpha=0.01$ and $\phi(x|\mu,\sigma) = 1/\sqrt{2\pi\sigma^2}\exp(-(x-\mu)^2/\sigma^2)$.

\item
Mixture contaminated:
$ (1-\alpha)\{ p \phi (x | 0, a)+(1-p) \phi (x | b, 1)\} + \alpha \phi(x; b/2, 7)$ with $\alpha=0.01$.

\item
Mixture mis-specified I:
$p f_0(x |0, a)+(1-p)f_0 (x |b, 1)$ with $f_0 (x)$ being Student-t with $4$ degrees of freedom.

\item
Mixture mis-specified II:
$p f_1(x |0, a)+(1-p) f_2(x | b, 1)$ with 
$f_1(x)$and $f_2(x)$ being Student-t with $2$ and $4$ degrees of freedom.
\end{enumerate}

In every case, we use the combinations of the $a$, $b$, and $p$ value-combinations the same as before.
We regard $ (1-\alpha)\{ p \phi (x | 0, a)+(1-p) \phi (x | b, 1)\}$
as the true distribution in all cases and computed the MARI accordingly.

\begin{figure}[htpb]
\centering  
\subfloat[Mixture with outliers]
{\includegraphics[width=0.9\textwidth]
{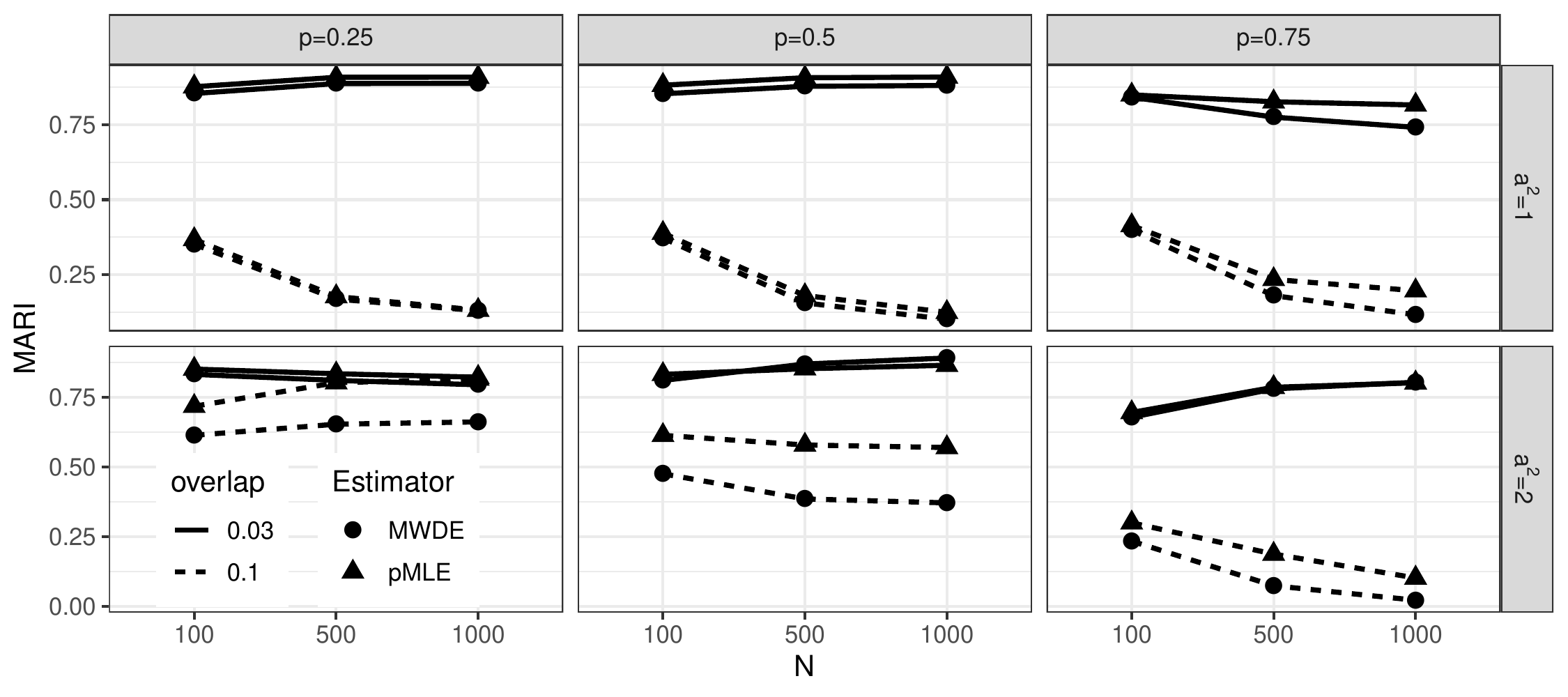}}\\
\subfloat[Mixture contaminated]
{\includegraphics[width=0.9\textwidth]
{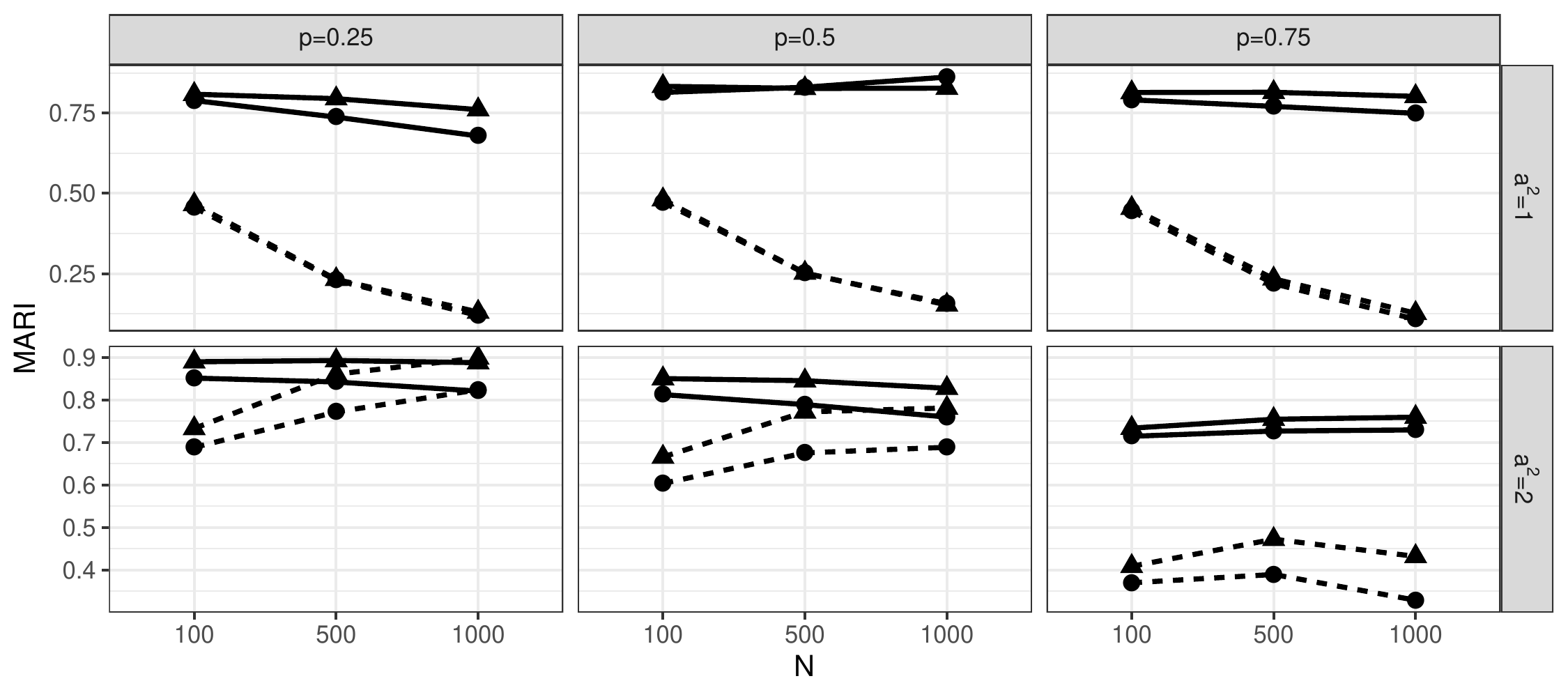}}
\caption{Adjusted rand index based on pMLE and MWDE
when data contains outliers or is contaminated}
\label{fig:robust1}
\end{figure}

\begin{figure}[hptb]
\centering  
\subfloat[Mixture mis-specified I]
{\includegraphics[width=0.9\textwidth]
{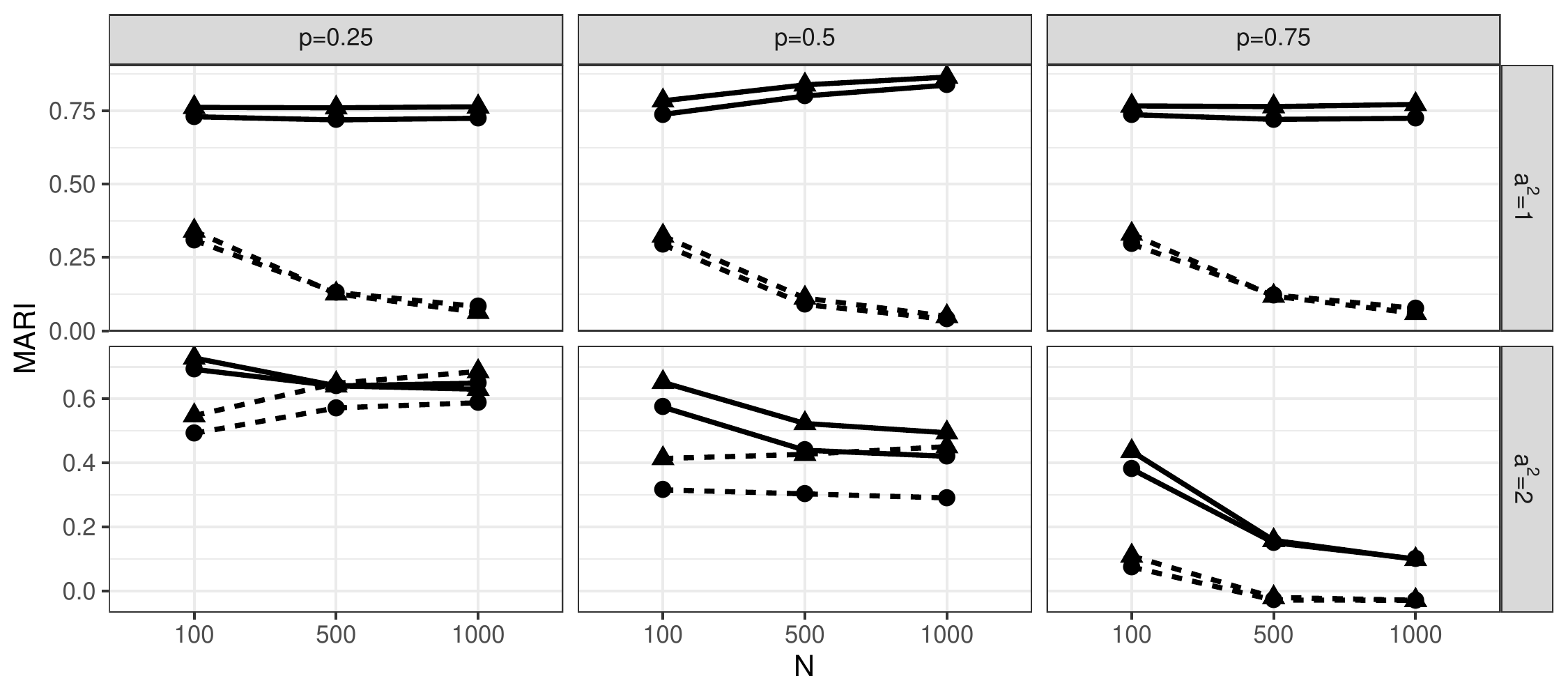}}\\
\subfloat[Mixture mis-specified II]
{\includegraphics[width=0.9\textwidth]
{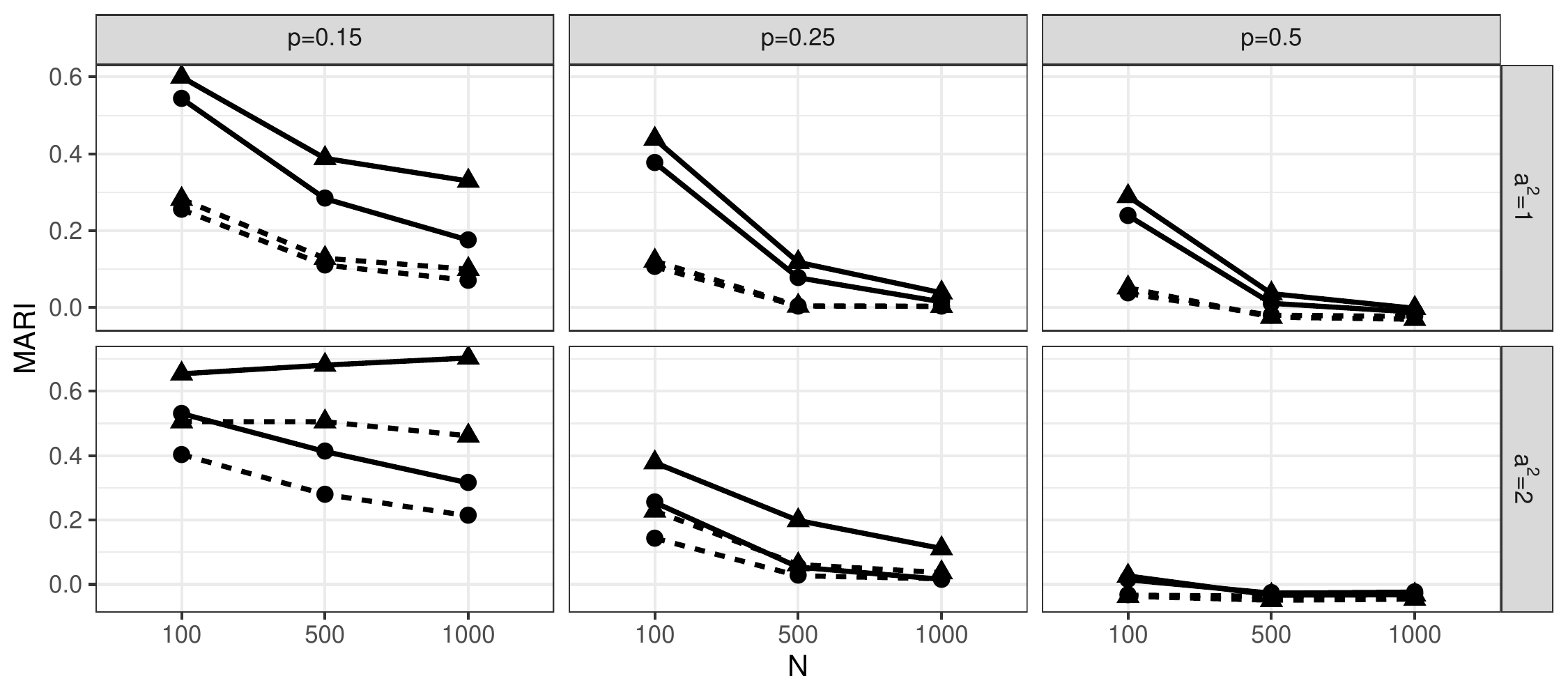}}
\caption{Adjusted rand index based on pMLE and MWDE
when subpopulation distributions are mis-specified}
\label{fig:robust2}
\end{figure}

We obtain the MARI values based on $R=1000$ repetitions with 
sample sizes $N=100$, $500$, and $1000$, see Figure~\ref{fig:robust1} and Figure~\ref{fig:robust2}.
We see that when the degree of overlap is low, MWDE and pMLE have similar performances. When the subpopulation variance is larger ($a^2 = 2$), the performance of pMLE is generally better.  In general, we conclude
that pMLE is preferred.

Statistical inference usually becomes more accurate when the sample size increases. 
This is not the case in this simulation experiment. We can see that MARI often decreases (becomes less accurate) when the sample size increases.
This is not caused by simulation error. 
When the model is mis-specified, the learned model does not converge to the "true model" as $N \to \infty$. 
Hence, the inference does not necessarily improve.
The moral of this simulation study is that the MWDE is not more robust
than the pMLE, against our intuition.

\subsection{Image Segmentation}
Image segmentation aims to partition an image into regions,
each with a reasonably homogeneous visual appearance 
or corresponds to objects or parts of objects~\citep[Chapter 9]{bishop2006pattern}. 
In this section, we perform image segmentation with finite normal mixtures,
a common practice in the machine learning community.

Each pixel in an image is represented by three numbers within the 
range of $[0,1]$ that corresponds to the intensities of the Red, Green, and Blue (RGB) channels.
Since the intensities values are always between 0 and 1,
unlike the common practice in the literature, we feel obliged to
transform the intensity values to ensure the normal mixture model fits better.
Let $y = \Phi^{-1}((x+1/N)/(1+2/N))$ with $x$ being the intensity 
and $N$ the total number of pixels in the image. 
We then learn a two-component normal mixture on $y$ values from each channel. 
Namely, we learn three normal mixtures on red, green, and blue channels respectively. 

We use the maximum posterior probability rule to assign each pixel to one of two clusters. We then form an image segment by pixels assigned to the same cluster. 
We visualize the segregated images channel-by-channel by re-drawing the image  with the original intensity value replaced by the average intensity of the pixels assigned to the specific cluster.

The segregated images depend heavily on the fitted mixture distributions.
We compare the segregated images obtained by the normal mixtures learned via the pMLE and MWDE.
We retrieved an image from Pexel~\footnote{~\url{https://www.pinterest.se/pin/761952830692007143/}}
as shown in Figure~\ref{fig:image_segmentation} (a).
\cite{clark2015pillow} resized the original high-resolution image to $433 \times 650$ grids using Lanczos filter.
We learn a normal mixture of order $K=2$ for each channel based on resized data sets and evaluated its utility of segregating the foreground and the background. 
\begin{table}[!ht]
\centering
\small
\caption{Estimated mixing distributions of the flower image by pMLE and MWDE.}
\begin{tabular}{lccccccc}
\toprule
 Channel &  Estimator& $w_1$ & $w_2$ & $\mu_1$ & $\mu_2$ & $\sigma_1$ & $\sigma_2$ \\
    \cmidrule(lr){1-8} 
\multirow{2}{*}{Red}& pMLE&0.896&0.104&-1.668&1.139&1.321&0.277\\
& MWDE&0.915&0.085&-1.617&1.220&1.316&0.213\\
    \cmidrule(lr){2-8} 
\multirow{2}{*}{Green} & pMLE & 0.804 & 0.196 & -0.935 & 0.637 & 0.373 & 0.595\\
&MWDE&0.819&0.181&-0.926&0.724&0.378&0.510\\
    \cmidrule(lr){2-8} 
\multirow{2}{*}{Blue} & pMLE& 0.735 & 0.265 & -0.753 & 0.268 & 0.414 &1.034\\
&MWDE& 0.862&0.138&-0.722&1.019&0.473&0.592\\
\bottomrule
\end{tabular}
\label{tab:image_segmentation}
\end{table}

\begin{figure}[htbp]
    \centering  
\subfloat[]{\includegraphics[width=0.19\linewidth,height=0.19\linewidth]
{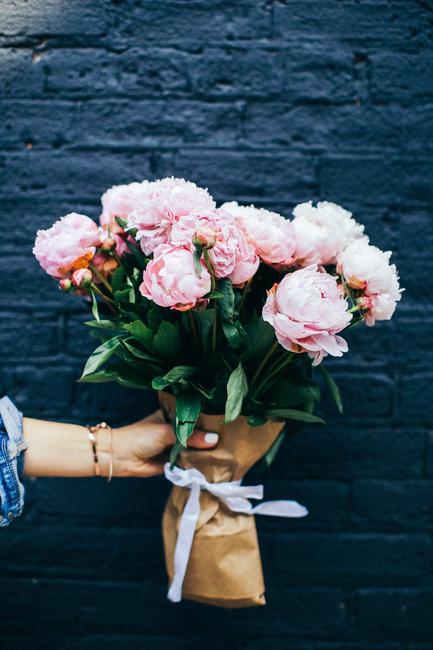}}
\subfloat[]{\includegraphics[width=0.19\linewidth,height=0.19\linewidth]
{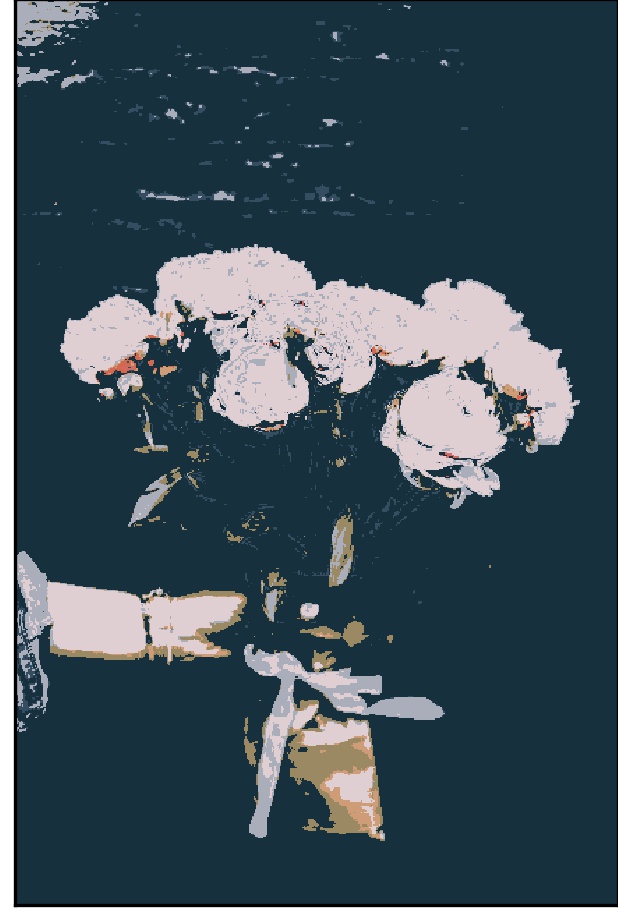}}
\subfloat[]{\includegraphics[width=0.19\linewidth,height=0.19\linewidth]
{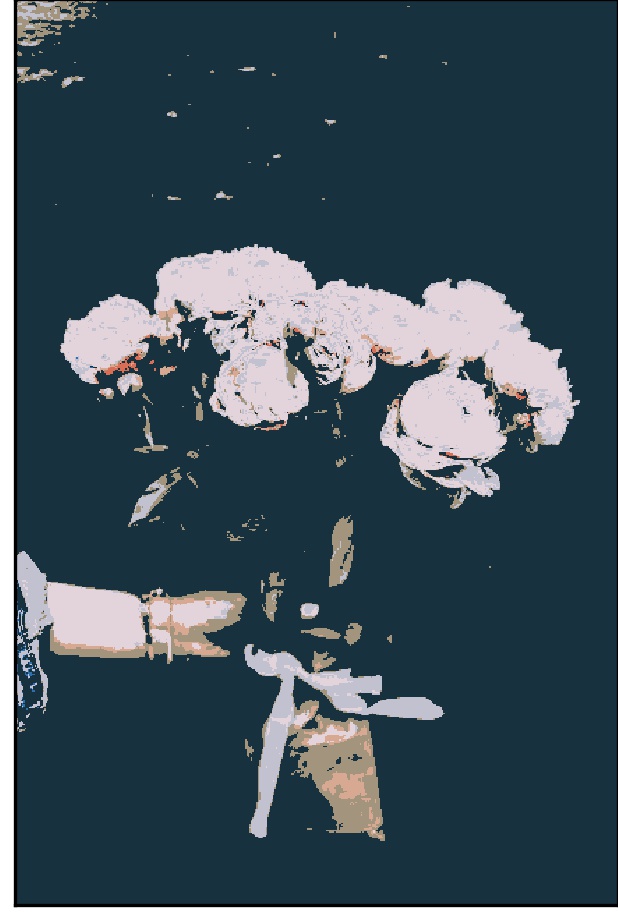}}
\\
\subfloat[]{\includegraphics[width=0.19\linewidth,height=0.19\linewidth]
{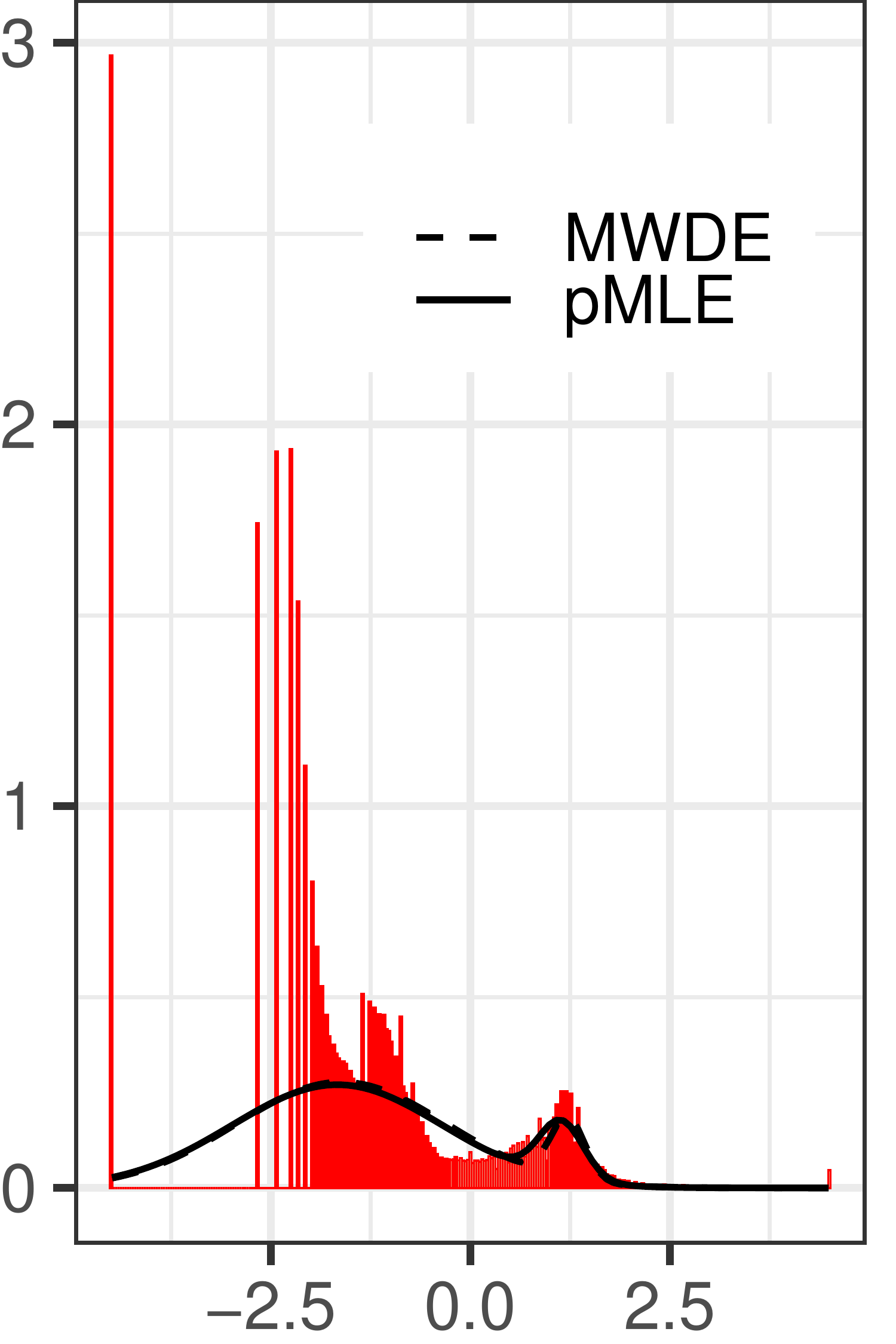}}
\subfloat[]{\includegraphics[width=0.19\linewidth,height=0.19\linewidth]
{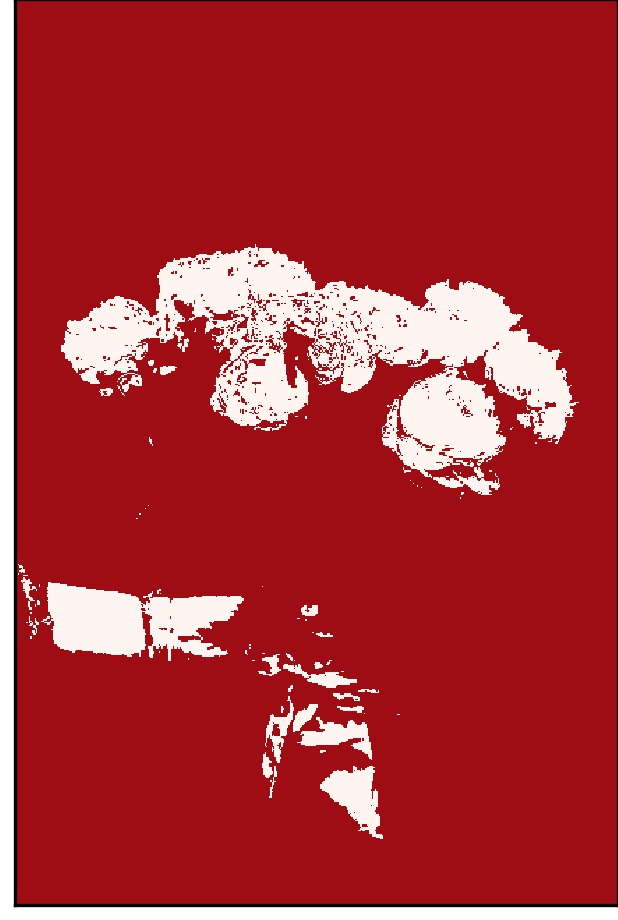}}
\subfloat[]{\includegraphics[width=0.19\linewidth,height=0.19\linewidth]
{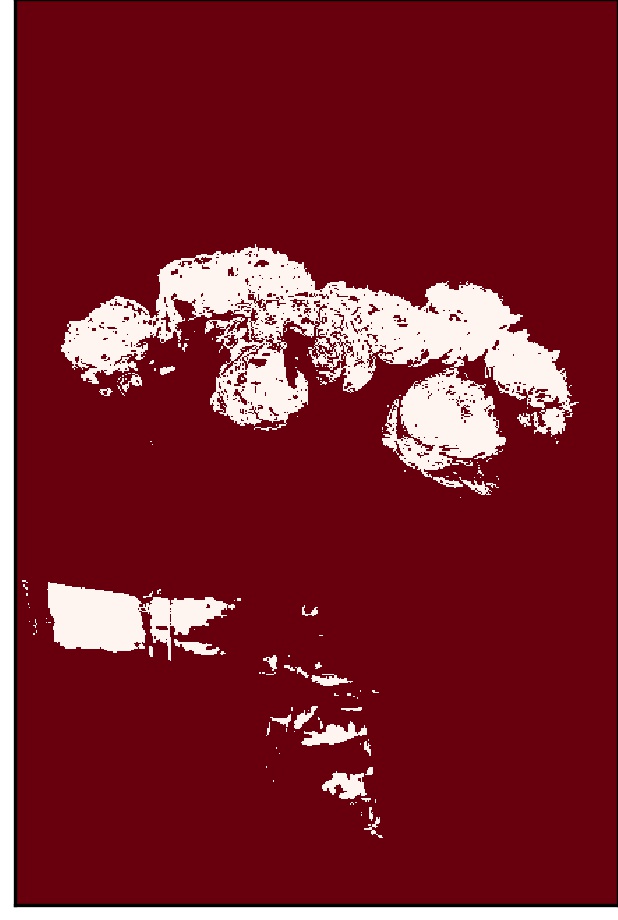}}
\\
\subfloat[]{\includegraphics[width=0.19\linewidth,height=0.19\linewidth]
{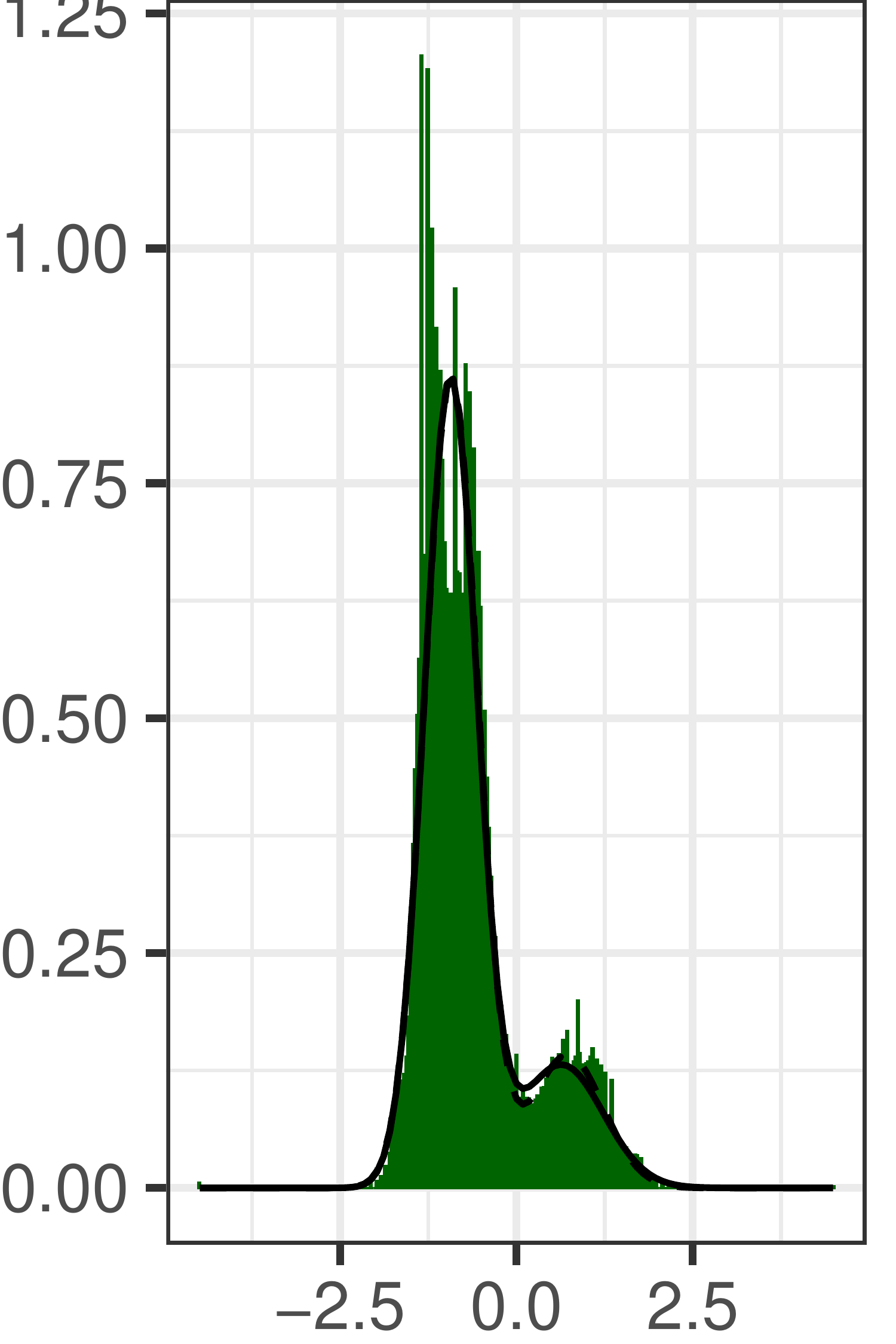}}
\subfloat[]{\includegraphics[width=0.19\linewidth,height=0.19\linewidth]
{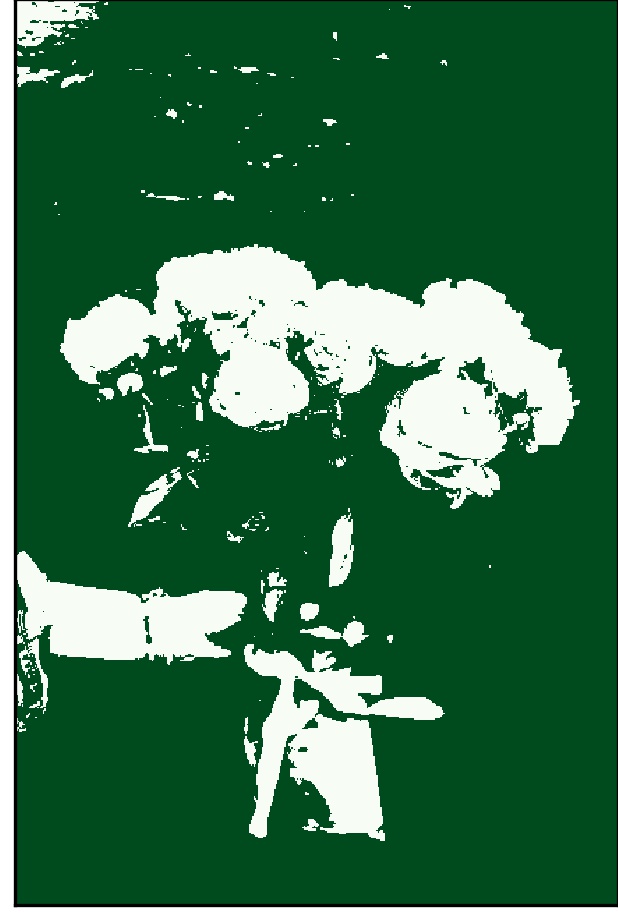}}
\subfloat[]{\includegraphics[width=0.19\linewidth,height=0.19\linewidth]
{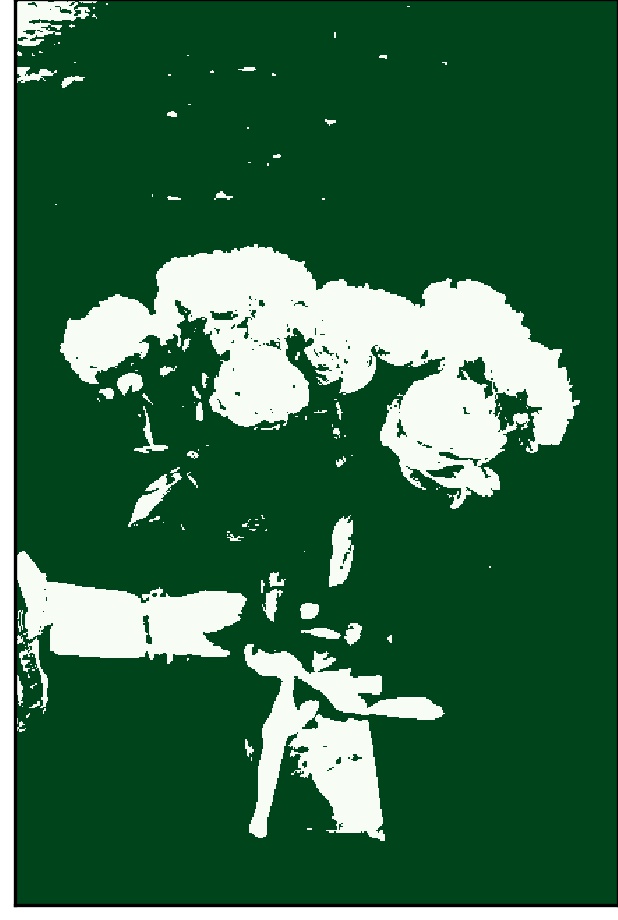}}
\\
\subfloat[]{\includegraphics[width=0.19\linewidth,height=0.19\linewidth]
{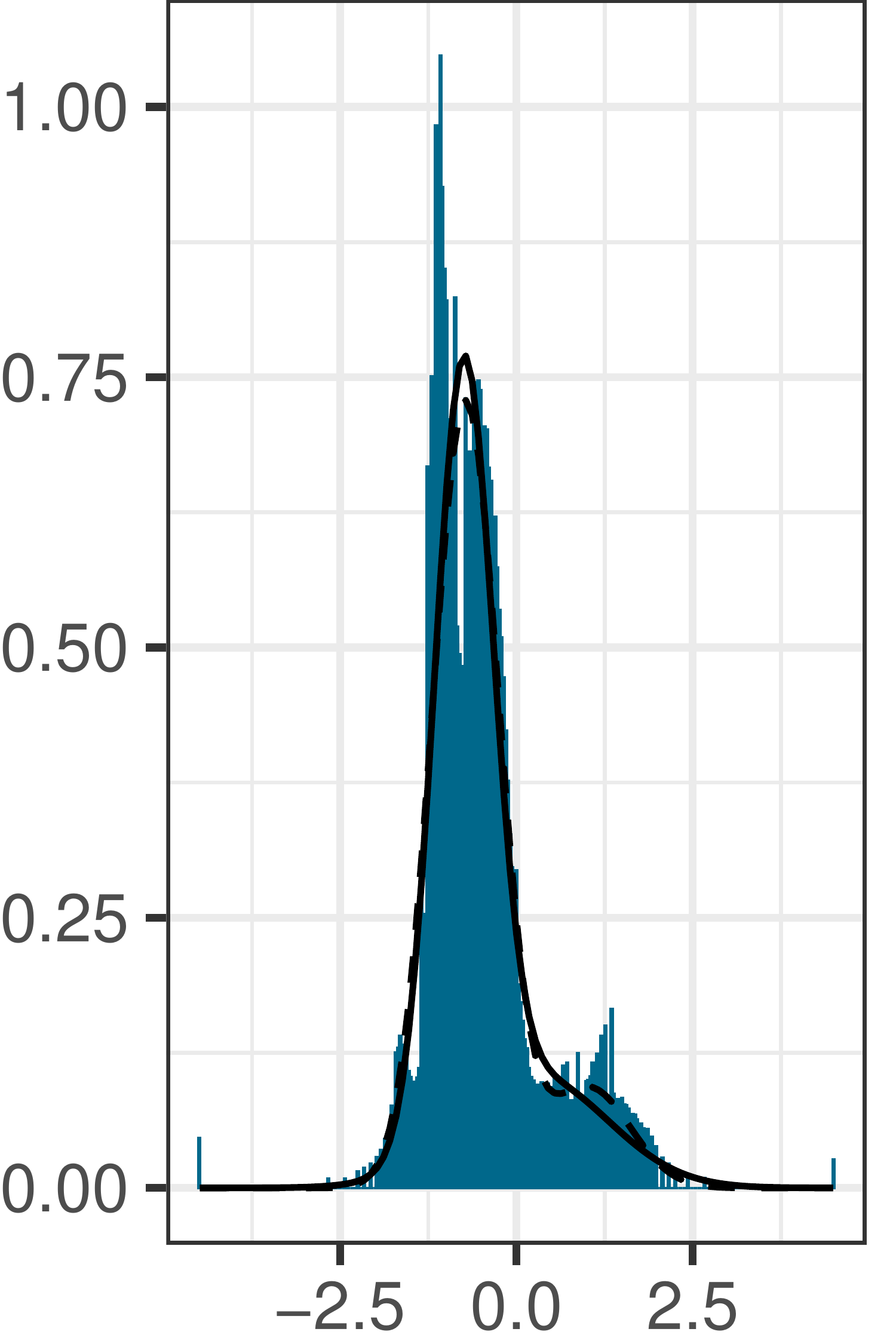}}
\subfloat[]{\includegraphics[width=0.19\linewidth,height=0.19\linewidth]
{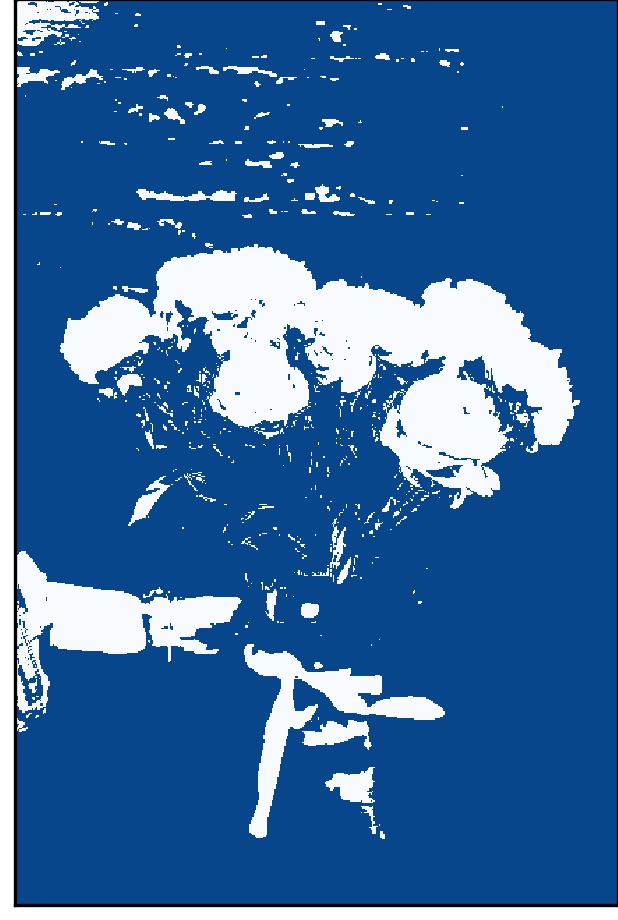}}
\subfloat[]{\includegraphics[width=0.19\linewidth,height=0.19\linewidth]
{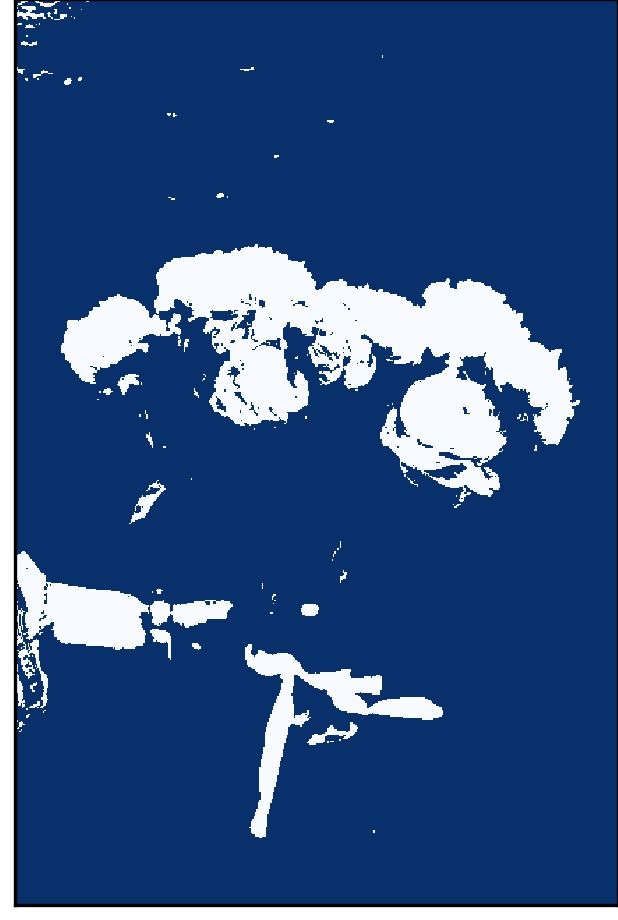}}
\caption{Flower image and its segmentation outcomes. 
}
 \label{fig:image_segmentation}
\end{figure}
We present the specifications of the learned mixing distributions by pMLE and MWDE in Table \ref{tab:image_segmentation}.
Plots (d), (g), and (j) in Figure~\ref{fig:image_segmentation} are histograms of the transformed intensity values of RGB channels, together with the mixture densities learned via pMLE and MWDE.
The corresponding segmented images are shown as plots (e), (h), and (k) for pMLE; (f), (i), and (l) for MWDE. 
The estimated parameter values and the fitted density on the red and green channels based on these two approaches are very similar. 
For the blue channel, the fitted densities and the segmentation results are very similar although the estimated parameter values of the second component are quite different.
Both approaches can produce images with meaningful structures segregating foreground from background.

There are two clusters in each of 3 channels leading to 8 refined clusters.
We may paint each pixel with the average RGB intensity triplet according to these 8 refined clusters. 
The re-created image via pMLE and MWDE respectively are shown in (b) and (c). 
We note these two images are very similar, showing that both learning strategies are effective.

\section{Conclusion}
\label{sec:conclusion}
The MWDE provides another approach for learning finite location-scale mixtures.
We have shown the MWDE is well defined and consistent.
Our moderate scaled simulation study shows it suffers some efficiency loss 
against a penalized version of MLE in general without noticeable gain in robustness.
We gain the knowledge on the benefits and drawbacks of the MWDE under finite location-scale mixtures.
We reaffirm the general superiority of the likelihood based learning strategies even for non-regular models.

{\small
\bibliographystyle{apalike}
\bibliography{biblio}
}


\section*{Appendix}

{\bf Numerically friendly expression of $W_2(F_N, F(\cdot|G))$.}
\label{sec:analytical_form_of_obj}
To learn the finite mixture distribution through MWDE,
we must compute
\begin{equation*}
\mathbb{W}_{N}(G) =W_2^2(F_N(\cdot), F(\cdot|G)) = \int_{0}^{1} \{ F_N^{-1}(t) - F^{-1}(t|G)\} ^2 dt
\end{equation*}
for finite location scale mixture 
\[
F(\cdot |G) 
= \sum_{k=1}^K \pi_k F(\cdot| \btheta_k)
= \sum_{k=1}^K \pi_k \sigma_k^{-1} F_0( (x - \mu_k)/\sigma_k).
\]
We write ${\mathbb E}_k(\cdot)$ as expectation under
distribution $ F(\cdot| \btheta_k)$.
For instance,
\[
\mathbb{E}_k\{X^2\} 
= \mu_k^2 + \sigma_k^2(\mu_0^2+\sigma_0^2)+2\mu_k\sigma_k\mu_0.
\]

Let $I_n = ( (n-1)/N, n/N]$ for $n=1, 2, \ldots, N$ so that
$F^{-1}_N (t) = x_{(n)}$ when $t \in I_n$,
where $x_{(n)}$ is the $n$th order statistic.
For ease of notation, we write $x_{(n)}$ as $x_n$. 
Over this interval, we have
\begin{equation}
\label{eqn.A1}
\int_{I_n} \{ F^{-1}_N(t) - F^{-1}(t|G) \}^2 dt
=
\int_{I_n} [ x^2_n - 2 x_n F^{-1}(t|G)  + \{ F^{-1}(t|G) \}^2 ] dt.
\end{equation}
The integration of the first term in \eqref{eqn.A1}, after summing over $n$, is 
given by 
\[
\sum_{n=1}^N \int_{I_n} x_n^2 dt =N^{-1} \sum_n x_n^2=\overline{x^2}.
\]
The integration of the third term in \eqref{eqn.A1} is
\[
\sum_{n=1}^N \int_{I_n}  \{ F^{-1}(t|G) \}^2 dt 
=
\int_{-\infty}^{\infty} x^2 f(x|G) dx
=
\sum_{k=1}^{K} w_k \mathbb{E}_k \{X^2\}.
\]
Let $\xi_0=-\infty$, $\xi_{N+1}=\infty$, and $\xi_n = F^{-1}(n/N|G)$ for $n=1, \ldots, N$. 
Denote 
\[
\Delta F_{nk} = F(\xi_{n}| \btheta_k) -  F(\xi_{n-1}| \btheta_k)
\]
and
\[
T(x) = \int_{-\infty}^x  t f_0(t) dt, ~~~
\Delta T_{nk}  = T((\xi_{n}-\mu_k)/\sigma_k) - T(\xi_{n-1}-\mu_k)/\sigma_k).
\]
Then
\begin{align*}
\int_{I_n} F^{-1}(t|G)dt
&=
\sum_k w_k \int_{\xi_{n-1}}^{\xi_{n}} x f(x|\mu_k,\sigma_k) dx \\
&=
\sum_k w_k \{ \mu_k   \Delta F_{nk} +  \sigma_k \Delta T_{nk} \}.
\end{align*}
These lead to numerically convenient expression
\[
\mathbb{W}_{N}(G) 
=
\overline{x^2}+
\sum_k w_k {\mathbb E}_{k}\{ X^2\}
-  
2\sum_k w_k \{ \mu_k   \Delta F_{nk} +  \sigma_k \Delta T_{nk} \}.
\]
To most effectively use BFGS algorithm, it is best to provide gradients of the objective function. 
Here are some numerically friendly expressions of some partial derivatives.

\begin{lemma}
\label{lemma:cdf_pd}
Let $\delta_{jk} = 1$ when $j=k$ and $\delta_{jk} = 0$ when $j \neq k$.
For $n = 1,\ldots, N$ and $j=1, 2, \ldots, K$, we have
\begin{align*}
\frac{\partial}{\partial w_j} F(\xi_n| \btheta_k) &= 
	 f(\xi_n|\btheta_k)\frac{\partial \xi_n}{\partial w_j},\\
\frac{\partial}{\partial \mu_j} F(\xi_n| \btheta_k)&=
	f(\xi_n|\btheta_k)\left (\frac{\partial \xi_n}{\partial \mu_j}-\delta_{jk}\right ), \\
\frac{\partial}{\partial \sigma_j} F(\xi_n| \btheta_k) &= 
	f(\xi_n|\btheta_k) \Big (\frac{\partial\xi_n}{\partial\sigma_j} 
		-\left \{\frac{\xi_n-\mu_k}{\sigma_k}\right \}\delta_{jk}\Big ),
\end{align*}
and
\begin{align*}
\frac{\partial}{\partial w_j}T \left (\frac{\xi_n-\mu_k}{\sigma_k} \right ) 
	&= f(\xi_n|\btheta_k) \left ( \frac{\xi_n-\mu_k}{\sigma_k}\right ) \frac{\partial\xi_i}{\partial w_j}, \\
\frac{\partial}{\partial \mu_j}T \left (\frac{\xi_n-\mu_k}{\sigma_k} \right ) 
	& =  f(\xi_n|\btheta_k) \left ( \frac{\xi_n-\mu_k}{\sigma_k}\right )  
		\left ( \frac{\partial\xi_n}{\partial \mu_j}-\delta_{jk}\right ), \\
\frac{\partial}{\partial \sigma_j} T \left (\frac{\xi_n-\mu_k}{\sigma_k} \right ) 
	& =  f(\xi_n|\btheta_k) \left ( \frac{\xi_n-\mu_k}{\sigma_k}\right )  
		\left \{  \frac{\partial\xi_i}{\partial\sigma_j} -
			 \left ( \frac{\xi_n-\mu_k}{\sigma_k} \right ) \delta_{jk} \right \}.
\end{align*}
Furthermore, we have
\begin{align*}
\frac{\partial\xi_n}{\partial \mu_k} &= \frac{w_k f(\xi_i|\btheta_k)}{f(\xi_n|G)}, \\
\frac{\partial\xi_n}{\partial \sigma_k} &= 
	\frac{w_k f(\xi_n|\btheta_k)}{f(\xi_i|G)}\left ( \frac{\xi_n-\mu_k}{\sigma_k}\right ) ,\\
\frac{\partial\xi_n}{\partial w_k} &= - \frac{ F(\xi_n|\btheta_k)}{f(\xi_n|G)}.
\end{align*}
\end{lemma}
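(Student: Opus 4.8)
The plan is to compute each partial derivative by straightforward application of the chain rule, handling the three ``layers'' of dependence in order: first the dependence of the quantiles $\xi_n$ on the mixing parameters $(w_k,\mu_k,\sigma_k)$, then the dependence of $F(\xi_n\mid\btheta_k)$ and $T((\xi_n-\mu_k)/\sigma_k)$ on both $\xi_n$ and the $k$th subpopulation parameters directly. I would actually establish the formulas for $\partial\xi_n/\partial(\cdot)$ \emph{first}, since the other six identities are then just one more application of the chain rule using these as inputs; presenting them last in the statement is only for readability.

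For the quantile derivatives, the key identity is the defining relation $F(\xi_n\mid G)=n/N$, i.e. $\sum_{j} w_j F(\xi_n\mid\btheta_j)=n/N$, which holds identically in the parameters. I would differentiate both sides with respect to $\mu_k$ (resp. $\sigma_k$, $w_k$) and solve for $\partial\xi_n/\partial(\cdot)$. Differentiating in $\mu_k$: the left side gives $f(\xi_n\mid G)\,\partial\xi_n/\partial\mu_k + w_k\,\partial F(\xi_n\mid\btheta_k)/\partial\mu_k$, and since $F(x\mid\btheta_k)=F_0((x-\mu_k)/\sigma_k)$ we have $\partial F(\xi_n\mid\btheta_k)/\partial\mu_k\big|_{\xi_n\text{ fixed}} = -\sigma_k^{-1}f_0((\xi_n-\mu_k)/\sigma_k) = -f(\xi_n\mid\btheta_k)$, so the right side being $0$ yields $\partial\xi_n/\partial\mu_k = w_k f(\xi_n\mid\btheta_k)/f(\xi_n\mid G)$. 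The $\sigma_k$ case is identical except the direct derivative of $F_0((\xi_n-\mu_k)/\sigma_k)$ in $\sigma_k$ carries an extra factor $-(\xi_n-\mu_k)/\sigma_k$. For $w_k$ one must remember the constraint $\sum_j w_j=1$; working with the unconstrained softmax coordinates $t_j$ (as the paper does) or, equivalently, treating the displayed formula as the ``raw'' partial before the simplex correction, differentiation of $\sum_j w_j F(\xi_n\mid\btheta_j)=n/N$ in $w_k$ gives $f(\xi_n\mid G)\,\partial\xi_n/\partial w_k + F(\xi_n\mid\btheta_k)=0$, hence $\partial\xi_n/\partial w_k = -F(\xi_n\mid\btheta_k)/f(\xi_n\mid G)$.

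For the remaining identities I would write $F(\xi_n\mid\btheta_k)=F_0((\xi_n-\mu_k)/\sigma_k)$ as a composition and apply the chain rule, noting that $F_0'=f_0$ and $f(\xi_n\mid\btheta_k)=\sigma_k^{-1}f_0((\xi_n-\mu_k)/\sigma_k)$. When differentiating with respect to $\mu_j$, there are two contributions: one through $\xi_n$ (always present, giving $f(\xi_n\mid\btheta_k)\,\partial\xi_n/\partial\mu_j$) and one through the explicit $\mu_k$ in the argument, which is present only when $j=k$ and contributes the $-\delta_{jk}$ term; similarly for $\sigma_j$ the explicit contribution is $-\big((\xi_n-\mu_k)/\sigma_k\big)\delta_{jk}$, and for $w_j$ there is no explicit contribution at all. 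The $T$-derivatives follow the same pattern with the additional outer factor $T'\!\big((\xi_n-\mu_k)/\sigma_k\big) = \big((\xi_n-\mu_k)/\sigma_k\big) f_0\big((\xi_n-\mu_k)/\sigma_k\big)$, and one rewrites $\sigma_k^{-1}f_0(\cdot)=f(\xi_n\mid\btheta_k)$ to match the stated form.

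I do not expect any genuine obstacle here; the only point requiring care is bookkeeping for the $w_j$ derivatives under the simplex constraint — one must be explicit about whether $w_1,\dots,w_K$ are treated as free (so that the chain through the softmax transformation $w_k=\exp(t_k)/\sum_j\exp(t_j)$ is applied separately, as in the gradient formulas in Section~\ref{sec:WDE_numerical_computation}) or as constrained — and the consistent separation of ``implicit'' contributions through $\xi_n$ from ``explicit'' ones through the $k$th subpopulation parameters, which is what produces the Kronecker-$\delta$ terms.
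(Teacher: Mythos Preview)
Your approach is correct and is exactly the natural one: implicit differentiation of the defining identity $\sum_j w_j F(\xi_n\mid\btheta_j)=n/N$ for the quantile derivatives, followed by the chain rule applied to the compositions $F_0((\xi_n-\mu_k)/\sigma_k)$ and $T((\xi_n-\mu_k)/\sigma_k)$, separating the contribution through $\xi_n$ from the explicit dependence on $\mu_k,\sigma_k$ (which is what produces the $\delta_{jk}$ terms). Note that the paper itself offers no proof of this lemma at all --- it is simply stated in the Appendix and then immediately used to derive the gradient formulas for $\mathbb{W}_N$ --- so there is nothing to compare your argument against; your write-up would in fact supply what the paper omits. Your remark about the $w_k$ derivative is also apt: the displayed formula $\partial\xi_n/\partial w_k=-F(\xi_n\mid\btheta_k)/f(\xi_n\mid G)$ is the raw partial treating the $w_k$ as free coordinates, and the simplex constraint is handled separately through the softmax reparametrization, consistent with how the paper assembles $\partial\mathbb{W}_N/\partial t_j$ afterwards.
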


Based on this lemma, it is seen that
\begin{align*}
\frac{\partial}{\partial \mu_j}\mathbb{W}_N& =
 2w_j(\mu_j+\sigma_j\mu_0)  - 2w_j\sum_{n=1}^N x_{(n)}\Delta F_{nj} \\
& 
 - 2 \sum_{n=1}^{N}\sum_{k} w_k \mu_k x_{(n)} \left \{ 
 \
 \frac{\partial F_0(\xi_n| \btheta_k)}{\partial\mu_j}
 - \frac{\partial F_0(\xi_{n-1} | \btheta_k)}{\partial\mu_j}
 \right \}
  \\
&
-2\sum_{n=1}^{N}\sum_{k} w_k\sigma_k x_{(n)}
	\frac{\partial }{\partial \mu_j}
		  \left \{
		  T \left ( \frac{\xi_{n}-\mu_k}{\sigma_k} \right ) 
		  -
		   T \left ( \frac{\xi_{n-1}-\mu_k}{\sigma_k} \right )
		  \right \}
\end{align*}
with $F_0(\xi_0|\theta_k)=0$, $F_0(\xi_{N+1}|\theta_k)=1$, 
$T \big ( \frac{\xi_{0}-\mu_k}{\sigma_k} \big)=0$, 
and $T\big(\frac{\xi_{N+1}-\mu_k}{\sigma_k} \big )=\int_{-\infty}^{\infty} tf_0(t)dt$ 
is a constant that does not depend on any parameters.
Substituting the partial derivatives in Lemma~\ref{lemma:cdf_pd}, we then get
\begin{align*}
\frac{\partial}{\partial \mu_j}\mathbb{W}_N
=&
 2w_j(\mu_j+\sigma_j\mu_0) -2w_j\sum_{n=1}^N x_{(n)}\Delta F_{nj}\\
&
-2\sum_{n=1}^{N-1}x_{(n)}\xi_n\sum_{k}w_k f(\xi_n|\mu_k,\sigma_k) 
\big(\frac{\partial\xi_n}{\partial\mu_j}-\delta_{jk}\big)\\
&
+2\sum_{n=1}^{N-1}x_{(n)}\xi_{n-1}\sum_{k}w_k f(\xi_{n-1}|
     \mu_k,\sigma_k) \big(\frac{\partial\xi_{n-1}}{\partial\mu_j}-\delta_{jk}\big)\\
=&
2w_j\big\{\mu_j + \sigma_j\mu_0 - \sum_{n=1}^N x_{(n)}\Delta F_{nj}\big\}
\end{align*}


Similarly, we have
\begin{align*}
\frac{\partial}{\partial \sigma_j}\mathbb{W}_N
=& \ 
2w_j\{\sigma_j(\mu_0^2+\sigma_0^2) + \mu_j\mu_0-\sum_{n=1}^N x_{(n)}\Delta\mu_{nj}\},
\\
\frac{\partial}{\partial w_k} \mathbb{W}_N
=&\
 \{\mu_k^2 + \sigma_k^2(\mu_0^2+\sigma_0^2)+2\mu_k\sigma_k\mu_0\} 
- 2 \sum_{n=1}^{N-1} \{x_{(n+1)} - x_{(n)}\} \xi_iF(\xi_n|\btheta_k)\\
&
- 2\big \{
\mu_k \sum_{n=1}^{N} x_{(n)} \Delta F_{nk}
+
\sigma_k \sum_{n=1}^{N} x_{(n)}\Delta T_{nk}
\big \}.
\end{align*}

Computing the quantiles of the mixture distribution $F(\cdot|G)$ for each $G$
is one of the most demanding tasks. 
The property stated in the following lemma allows us to develop a bi-section algorithm. 

\begin{lemma}
Let $F(x| G)=\sum_{k=1}^K F(x|\mu_k,\sigma_k)$ be a $K$-component mixture, 
$\xi(t) = F^{-1}(t|G)$ and $\xi_k (t) = F^{-1}(t | \btheta_k)$ 
respectively the $t$-quantile of the mixture and its $k$th subpopulation. 
For any $t \in ( 0,1) $,
\begin{equation}
\label{eq:qmixnorm_bound}
\min_{k} \xi_{k}(t)\leq \xi(t) \leq \max_{k} \xi_{k}(t).
\end{equation}
\end{lemma}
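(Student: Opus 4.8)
The plan is to argue directly from the definition of the generalized quantile, $F^{-1}(t\,|\,\cdot)=\inf\{x:F(x\,|\,\cdot)\geq t\}$, together with the monotonicity and right-continuity of cumulative distribution functions, and to use the fact that the mixture CDF is the convex combination $F(\cdot\,|\,G)=\sum_{k=1}^{K}w_k F(\cdot\,|\,\btheta_k)$ with $w_k\geq 0$ and $\sum_k w_k=1$. Write $m=\min_k\xi_k(t)$ and $M=\max_k\xi_k(t)$; the two inequalities in \eqref{eq:qmixnorm_bound} are handled by essentially symmetric arguments on the points at and below $M$, respectively strictly below $m$.

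For the upper bound $\xi(t)\leq M$, I would first record the one-sided fact that $F(\xi_k(t)\,|\,\btheta_k)\geq t$ for each $k$, which follows from right-continuity of $F(\cdot\,|\,\btheta_k)$ (pick a decreasing sequence converging to $\xi_k(t)$ along which the CDF is $\geq t$, and pass to the limit). Since $M\geq\xi_k(t)$ and each $F(\cdot\,|\,\btheta_k)$ is nondecreasing, $F(M\,|\,\btheta_k)\geq F(\xi_k(t)\,|\,\btheta_k)\geq t$ for every $k$, hence $F(M\,|\,G)=\sum_k w_k F(M\,|\,\btheta_k)\geq\sum_k w_k t=t$. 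Thus $M\in\{x:F(x\,|\,G)\geq t\}$, and taking the infimum of that set gives $\xi(t)\leq M$.

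For the lower bound $\xi(t)\geq m$, I would argue on points to the left of $m$. Fix any $x<m$, so $x<\xi_k(t)$ for every $k$. Because $\xi_k(t)$ is the infimum of the upper set $\{y:F(y\,|\,\btheta_k)\geq t\}$, any point strictly below it violates the defining inequality, i.e.\ $F(x\,|\,\btheta_k)<t$ for all $k$. Averaging, $F(x\,|\,G)=\sum_k w_k F(x\,|\,\btheta_k)<t$, so no $x<m$ belongs to $\{y:F(y\,|\,G)\geq t\}$; taking the infimum yields $\xi(t)\geq m$. Combining the two bounds proves \eqref{eq:qmixnorm_bound}.

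There is no real obstacle here: the only care needed is in stating the two one-sided facts — $F(\xi_k(t)\,|\,\btheta_k)\geq t$ and $F(x\,|\,\btheta_k)<t$ for $x<\xi_k(t)$ — precisely, since these are exactly where right-continuity and the $\inf$-convention enter, and in making sure the argument still reads correctly when some subpopulation has $\sigma_k=0$ (then $F(\cdot\,|\,\btheta_k)$ is a point mass at $\mu_k$ and $\xi_k(t)\equiv\mu_k$, which the above covers verbatim).
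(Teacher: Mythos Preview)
Your proof is correct and follows essentially the same route as the paper: both arguments establish $F(m\,|\,G)\leq t\leq F(M\,|\,G)$ by bounding each component CDF via monotonicity and then averaging. The only difference is cosmetic: the paper assumes continuity of each $F(\cdot\,|\,\btheta_k)$ and writes $F(\xi_k(t)\,|\,\btheta_k)=t$ directly, whereas you work with the generalized inverse and the one-sided inequalities $F(\xi_k(t)\,|\,\btheta_k)\geq t$ and $F(x\,|\,\btheta_k)<t$ for $x<\xi_k(t)$, which makes your version robust to the degenerate $\sigma_k=0$ case without any extra hypothesis.
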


\begin{proof}
Since $F(x | \btheta)$ has a continuous CDF,
we must have $F(\xi_{k}(t)| \btheta_k) = t$. 
By the monotonicity of the CDF $F(\cdot| \btheta_k)$, we have
\begin{equation*}
\label{eq:qmixnorm}
F(\min_{k}\xi_{k}(t)| \btheta_k) \leq F(\xi_{k}(t)| \btheta_k)
\leq 
F(\max_{k} \xi_{k}(t)| \btheta_k ).
\end{equation*}
Multiplying by $w_k$ and summing over $k$ lead to
\[
F(\min_{k}\xi_{k}(t)| G)\leq t\leq F(\max_{k} \xi_{k}(t)| G).
\]
This implies~\eqref{eq:qmixnorm_bound} and completes the proof. 
\end{proof}

In view of this lemma, we can easily find the quantiles of $F(\cdot | \btheta_k)$
to form an interval containing the targeting quantile of $F(\cdot| G)$.
We can quickly find $F^{-1}(t | G)$ value through a bi-section algorithm.

\end{document}